\newcommand{\mymacro}[1]{{#1}}
\crefname{section}{\S}{\S\S}
\Crefname{section}{\S}{\S\S}
\crefname{figure}{Fig.}{Fig.}
\crefname{alg}{Alg.}{Alg.}
\crefname{line}{line}{lines}
\crefname{appendix}{App.}{Apps.}
\crefname{equation}{eq.}{eqs.}
\crefname{table}{Tab.}{Tables}
\crefname{proposition}{Prop.}{Props.}
\crefname{principle}{Principle}{Principles}
\crefname{remark}{Remark}{Remarks}
\crefname{assumption}{Assumption}{Assumptions}
\crefname{hypothesis}{Hypothesis}{Hypotheses}
\theoremstyle{plain} %
\declaretheorem[name=Theorem]{theorem}
\newtheorem{definition}{Definition}
\newtheorem{lemma}[theorem]{Lemma}
\newtheorem{proposition}{Proposition}
\newtheorem{corollary}{Corollary}
\newtheorem{example}{Example}
\newcommand{\defn}[1]{\mymacro{\textbf{#1}}}
\newcommand{\defeq}[0]{\mathrel{\mymacro{\stackrel{\textnormal{\tiny def}}{=}}}}
\newenvironment{proofsketch}{%
\proof}{\endproof}
\newcommand{\Renyi}{\mymacro{R{\'e}nyi}\xspace}
\newcommand{\renyient}{\mymacro{\ent_{\renyialpha}}}
\newcommand{\renyialpha}{\mymacro{\gamma}}
\newcommand{\gap}{\mymacro{\Delta}}
\newcommand{\gapalpha}{\mymacro{\gap\renyialpha}}
\newcommand{\reward}{{\mymacro{r}}}
\newcommand{\rewardbound}{{\mymacro{B}}}
\newcommand{\Aset}{\mymacro{\mathcal{A}}}
\newcommand{\good}{\mymacro{\textsc{+}}}
\newcommand{\bad}{\mymacro{\textsc{--}}}
\newcommand{\qalign}{\mymacro{q_{\good}}}
\newcommand{\palign}{\mymacro{\plm_{\good}}}
\newcommand{\normalizingconstant}{\mymacro{Z}}
\newcommand{\normalizingconstantrlhf}{\mymacro{\normalizingconstant(\good)}}
\newcommand{\anejchange}[1]{{#1}}
\newcommand{\naamanchange}[1]{{#1}}
\newcommand{\acceptancer}{\mymacro{a}}
\newcommand{\rvY}{\mymacro{\boldsymbol{Y}}}
\newcommand{\rvI}{\mymacro{I}}
\newcommand{\rvA}{\mymacro{A}}
\newcommand{\information}{\mymacro{\iota}}
\newcommand{\prob}{\mymacro{\mathbb{P}}}
\newcommand{\var}{\mymacro{\mathbb{V}}}
\newcommand{\typicalset}{ \mymacro{T_N^{\varepsilon}}}
\newcommand{\bigo}{\mymacro{\mathcal{O}}}
\newcommand{\Rpos}{\mymacro{\mathbbm{R}_{> 0}}}
\newcommand{\indicatorf}{\mymacro{\mathbbm{1}}}
\newcommand{\setof}[1]{\mymacro{\{ #1 \}}}
\newcommand{\powerset}{\mymacro{\mathcal{P}}}
\newcommand{\simplex}{\mymacro{\Delta}}
\newcommand{\ent}{\mymacro{\mathrm{H}}}
\newcommand{\identity}{\mymacro{\mathbb{I}}}
\DeclareMathOperator*{\argmax}{\mymacro{argmax}}
\DeclareMathOperator*{\argmin}{\mymacro{argmin}}
\newcommand{\KL}{\mymacro{\mathrm{KL}}}
\newcommand{\KLdiv}[2]{\mymacro{\KL( #1 \mid\mid #2)}}
\DeclareMathOperator*{\expected}{\mymacro{\mathbb{E}}}
\newcommand{\token}{\mymacro{y}}
\newcommand{\eostoken}{\mymacro{\overline{\token}}}
\newcommand{\tokent}{\mymacro{\token_t}}
\newcommand{\str}{\mymacro{\boldsymbol{y}}}
\newcommand{\strprime}{\mymacro{\str'}}
\newcommand{\strltt}{\mymacro{\str_{<t}}}
\newcommand{\strn}{\mymacro{\str^{(n)}}}
\newcommand{\strset}{\mymacro{\mathcal{Y}}}
\newcommand{\alphabet}{\mymacro{{\Sigma}}}
\newcommand{\eosalphabet}{\mymacro{\overline{\alphabet}}}
\newcommand{\kleene}[1]{\mymacro{{#1^*}}}
\newcommand{\sumoverstrings}{\mymacro{\sum_{\str \in \kleene{\alphabet}}}}
\newcommand{\sumoverstringsprime}{\mymacro{\sum_{\strprime \in \kleene{\alphabet}}}}
\newcommand{\sumovereostokens}{\mymacro{\sum_{\eostoken \in \eosalphabet}}}
\newcommand{\prodstring}{\mymacro{\prod^{|\str|}_{t=1}}}
\newcommand{\domain}{\mymacro{\mathcal{D}}}
\newcommand{\datapoint}{\mymacro{x}}
\newcommand{\fprior}{F_\globalsamplingadaptor\mleft[{\plm(\str)}\mright]}
\newcommand{\plm}{\mymacro{p}}
\newcommand{\eos}{\mymacro{\textsc{eos}}\xspace}
\newcommand{\plmlocalsampling}{\mymacro{\ddot{\plm}}}
\newcommand{\plmsampling}{\mymacro{\widetilde{\plm}}}
\newcommand{\palignsampling}{\mymacro{\plmsampling_+}}
\newcommand{\globalsamplingadaptor}{\mymacro{\boldsymbol{\gamma}}}
\newcommand{\normalizingconstantstring}{\mymacro{\normalizingconstant_\globalsamplingadaptor(\alphabet)}}
\newcommand{\truncset}{\mymacro{C}}
\newcommand{\truncsett}{\mymacro{\truncset\mleft(\plm(\cdot \mid \strltt)\mright)}}
\newcommand{\simplextoken}{\mymacro{\simplex^{\mid \eosalphabet \mid -1}}}
\newcommand{\Rpostoken}{\mymacro{\Rpos^{\mid \eosalphabet \mid}}}
\newcommand{\scaling}{\mymacro{\kappa}}
\newcommand{\topk}{\mymacro{k}}
\newcommand{\topp}{\mymacro{\pi}}
\newcommand{\temp}{\mymacro{\tau}}
\newcommand{\spearman}{\mymacro{\rho}}
\newcommand{\pearson}{\mymacro{r}}
\algrenewcommand\algorithmicindent{1.0em}%
\newcommand{\rightcomment}[1]{{ \(\textcolor{ETHGreen}{\smalltriangleright}\){\footnotesize\textit{ #1}}}}
\algrenewcommand{\algorithmiccomment}[1]{\hfill \rightcomment{#1}}  %
\algnewcommand{\LinesComment}[1]{\State {\color{black!50!green}\rightcomment{\parbox[t]{.95\linewidth-\leftmargin-\widthof{\(\smalltriangleright\) }}{#1}}}}
\algnewcommand{\LineComment}[1]{\State {\color{black!50!green}\smaller \(\smalltriangleright\) \parbox[t]{\linewidth-\leftmargin-\widthof{\(\smalltriangleright\) }}{\it #1}\smallskip}} %
\algnewcommand{\InlineComment}[1]{\hfill {\color{black!50!green}\(\smalltriangleright\) {\scriptsize \it #1}}}
\algrenewcommand\algorithmicindent{1.0em}%
\algrenewcommand\alglinenumber[1]{{\tiny\color{black!50}#1.}\hspace{-2pt}}
\newcommand{\algorithmicfunc}[1]{\textbf{def} {#1}:}
\newcommand{\ucopenhagen}{\emoji[emoji_src]{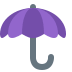}}
\newcommand{\ethz}{\emoji[emoji_src]{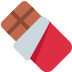}}
\newcommand{\nus}{\emoji[emoji_src]{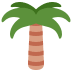}}
\title{A Probability--Quality Trade-off in Aligned Language Models and its Relation to Sampling Adaptors}
\author{
    Naaman Tan$^{\nus}$ \quad
    Josef Valvoda$^{\ucopenhagen}$\quad
    Tianyu Liu$^{\ethz}$\quad
    Anej Svete$^{\ethz}$\\
    \textbf{Yanxia Qin}$^{\nus}$\quad
    \textbf{Kan Min-Yen}$^{\nus}$\quad
    \textbf{Ryan Cotterell}$^{\ethz}$ \\
    $^{\nus}$National University of Singapore\quad\quad
    $^{\ucopenhagen}$University of Copenhagen\quad\quad
    $^{\ethz}$ETH Z{\"u}rich \\
    \{\burlalt{tannaaman}{mailto:tannaaman@u.nus.edu}, \burlalt{knmnyn}{mailto:knmnyn@nus.edu.sg}\}\texttt{@nus.edu.sg}\quad
    \burlalt{jval@di.ku.dk}{mailto:jval@di.ku.dk} \\ 
    \{\burlalt{tianyu.liu}{mailto:tianyu.liu@inf.ethz.ch}, \burlalt{asvete}{mailto:asvete@inf.ethz.ch},\burlalt{rcotterell}{mailto:rcotterell@inf.ethz.ch}\}\texttt{@inf.ethz.ch}
}
\begin{document}
\maketitle

\begin{abstract}
The relationship between the quality of a string, as judged by a human reader, and its probability, $\plm(\str)$ under a language model undergirds the development of better language models.
For example, many popular algorithms for sampling from a language model have been conceived with the goal of manipulating $\plm(\str)$ to place higher probability on strings that humans deem of high quality \citep{fan-etal-2018-hierarchical,holtzman2020curious}.
In this article, we examine the probability--quality relationship in language models explicitly aligned to human preferences, e.g., through reinforcement learning through human feedback. 
We show that, when sampling corpora from an aligned language model, there exists a trade-off between the strings' average reward and average log-likelihood under the prior language model, i.e., the same model before alignment with human preferences.
We provide a formal treatment of this phenomenon and demonstrate how a choice of sampling adaptor allows for a selection of how much likelihood we exchange for the reward.\looseness=-1

\vspace{0.5em}
\hspace{0.7em}\includegraphics[width=1.25em,height=1.25em]{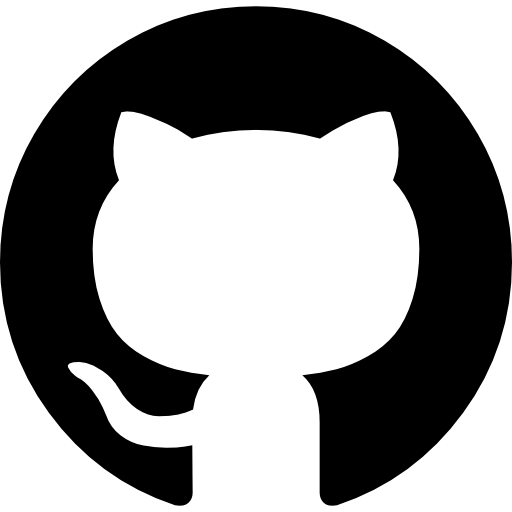}\hspace{0.50em}\parbox{\dimexpr\linewidth-7\fboxsep-2\fboxrule}{\url{https://github.com/tanyjnaaman/probability-quality-paradox}}
\vspace{0.2em}
\end{abstract}

\section{Introduction}
The relationship between the probability of a string and its quality as judged by a human reader is fundamental to the goal of producing high-quality text from probabilistic language models. 
For example, the belief that a string $\str$ with a higher probability under a language model $\plm$ should be of higher quality \citep{graves2012sequence,fan-etal-2018-hierarchical,holtzman2020curious,zhang-etal-2021-trading} has motivated the development of sampling adaptors like top-$\topk$ \citep{fan-etal-2018-hierarchical} and nucleus sampling \citep{holtzman2020curious}.
The intuition behind this belief is that, under a well-calibrated language model trained primarily on human-written text, strings that occur with high probability should be more human-like, and thus judged by humans to be of higher quality.
Indeed, sampling methods that skew a language model towards high-probability strings have been shown to dramatically improve the quality of text sampled from a model \citep{wiher-etal-2022-decoding}.
Over the years, several studies have since contributed to a more nuanced understanding of the probability--quality relationship \citep{holtzman2020curious,zhang-etal-2021-trading,basu2021mirostat} and led to the development of more sophisticated sampling methods \citep{hewitt-etal-2022-truncation,meister-etal-2022-high}.\looseness=-1

This paper seeks to explain the relationship between probability and quality in the specific case of \defn{aligned} language models, i.e., language models explicitly fine-tuned to better match human preferences, e.g., through reinforcement learning with human feedback \citep[RLHF;][]{christiano2017deep,leike2018scalable,ziegler2020finetuning,stiennon2020rlhf,ouyang2022instruct,ouyang2022rlhf,korbak-etal-2022-rl,korbak2023rlhf}.
In particular, we provide a formal argument and empirical evidence that in RLHF-tuned models, there is an inherent \emph{anti-}correlation, i.e., a \emph{trade-off}, between probability and quality.

In the theoretical portion of this paper, we formalize the probability--quality trade-off in aligned language models.
Specifically, we show that for corpora of generated strings of a large enough size, the average log-probability under the \defn{prior} (unaligned) language model $\plm$ trades off with the average score assigned by a reward model \citep{gao2023scalingreward}.
We refer to this phenomenon as a probability--quality trade-off because, in the context of aligned language models, quality is often operationalized by the reward model \citep{perez-etal-2022-red,lee2024aligning}.
This trade-off follows straightforwardly from 
an application of a concentration inequality and can be seen as a direct corollary of the asymptotic equipartition property \citep[AEP;][]{elements-of-info-theory}.
In addition to the standard version of the AEP based on Chebyshev's inequality, we also prove a tighter version of the AEP based on a Chernoff bound that applies to certain language models, including Transformer-based models \citep{vaswani2017attention}.
Lastly, we show how an appropriate choice of a globally normalized sampling adaptor can control the probability of a generated string under the prior $\plm$.
This finding implies that they can be used to choose a point on the probability--quality trade-off and control the average quality of the generated text.
Interestingly, the trade-off predicts the emergence of Simpson's paradox, which we also observe in our experiments.\looseness=-1

In the empirical part of our paper, we present two sets of experiments.
First, with synthetic data, we construct toy language and reward models to further demonstrate our theoretical results.
This gives easily reproducible empirical evidence for our claim.
Then, with a second set of experiments, we show that this trade-off exists in practice with open-sourced RLHF-tuned models and that globally normalized sampling adaptors do allow us to control where a corpus of generated text will lie on the trade-off.
Notably, we find that in practice, \defn{locally normalized sampling adaptors} \citep[e.g., nucleus sampling;][]{holtzman2020curious,meister-etal-2023-efficacy} are sufficiently close approximations of their globally normalized counterparts, and can be effectively used to control the trade-off. 

\section{The Probability--Quality Relationship}
Let $\alphabet$ be an \defn{alphabet}.
A \defn{string} $\str \in \kleene{\alphabet}$ is a finite list of symbols drawn from $\alphabet$.
The set of all strings is called $\alphabet$'s \defn{Kleene closure} and is denoted as $\kleene{\alphabet}$.
A \defn{language model} $\plm$ is a probability distribution over $\kleene{\alphabet}$.
It is often assumed that there is a \emph{positive} correlation between the quality of a string and its probability under a language model trained on human text $\plm(\str)$ \citep{graves2012sequence,fan-etal-2018-hierarchical,holtzman2020curious,zhang-etal-2021-trading}.\footnote{Probability here refers to string likelihood under a general-purpose, \emph{unconditional} language model. The distinction is important since we will examine the relationship between this probability and the quality of strings in an \emph{aligned} language model.\looseness=-1}
The idea behind this assumption is that under such a model, a higher probability string should be more human-like, and thus judged by humans to be of higher quality.
Because of this positive correlation, string probability is commonly used to reason about text quality in the context of natural language processing.
For example, a number of studies on language modeling methods report measures of perplexity to quantify the quality of text produced by the model \citep{vaswani2017attention,devlin2019bert,brown2020,mauveJMLR}.\looseness=-1

More recently, several authors have observed evidence that this positive correlation breaks down at extremes, finding instead that a string's probability is positively correlated with its quality up to an inflection point, after which it becomes negatively correlated \citep{yang-etal-2018-breaking,stahlberg-byrne-2019-nmt,holtzman2020curious,zhang-etal-2021-trading,meister-etal-2022-high}.
\citet{meister-etal-2022-high} show that this inflection point lies near the entropy of human-written text, and hypothesize that a string should encode a similar amount of information to natural language strings to be considered high-quality. 
This finding has since inspired various sampling adaptors, e.g., locally typical \citep{meister-etal-2023-locally} and $\eta$-sampling \citep{hewitt-etal-2022-truncation}.

In this paper, we investigate the probability--quality relationship in aligned language models.
There are two reasons for this choice.
First, aligned models have an additional constraint---they are fine-tuned to \emph{only} produce high-quality text (as judged by a reward model).
And, \textit{a priori}, it is unclear how alignment might influence the probability--quality relationship.
Second, aligned language models are mathematically similar to the conditional language models often found in machine translation and controlled text generation, for which prior work has found relationships not seen in unconditional language models \citep{callison-burch-etal-2007-meta,Banchs2015AdequacyFluencyME,teich2020handbook,sulem-etal-2020-semantic,lim2024simpsons}.\looseness=-1

\section{Learning from Human Feedback}\label{sec:rlhf}
Because our investigation focuses on aligned language models, we now introduce RLHF, a popular algorithm for alignment.
Given a task of interest, we are interested in producing strings that are aligned with human preferences for that task.
Let $\Aset = \setof{\bad, \good}$ denote binary judgments of alignment, and $\rvA$ be an $\Aset$-valued random variable.
Then, we assume the existence of a true human-aligned distribution over strings $\palign(\str) \defeq \plm(\str \mid \rvA = \good)$, such that strings with high probability under $\palign$ also receive positive scores from human annotators.
For example, we may desire that strings that are offensive should have a lower probability under $\palign$ for chat-related tasks.
With this, we can formalize the goal of alignment as obtaining an aligned language model $\qalign(\str)$ such that $\qalign(\str)$ is a good approximation to $\palign(\str)$.\looseness=-1

RLHF is a widely used method of finding such a model.
At the core of RLHF is a \defn{reward function}, which models preferences of human annotators, denoted as $\reward \colon \kleene{\alphabet} \rightarrow (-\infty, \rewardbound]$ where $\rewardbound$ is a bound in $\in \Rpos$.\footnote{We assume that the reward function is bounded, following \citet{levine2018reinforcement,korbak-etal-2022-rl}.\looseness=-1}
In practice, the reward function is itself typically parameterized by a neural network and derived by modeling preferences with a Bradley--Terry model \citep{bradley1952} and a ranked dataset.
The human-aligned language model $\palign(\str)$, reward function $\reward(\str)$ and prior language model $\plm(\str)$ can be related as follows \citep{korbak-etal-2022-rl}:\looseness=-1
\begin{equation}\label{eq:rlhf-bayes}
        \palign(\str) = \frac{\plm(\str) \exp \left(\frac{1}{\beta}\reward(\str)\right)}{\normalizingconstantrlhf},
\end{equation}
where $\beta \in \Rpos$ is a scaling factor and\looseness=-1
\begin{equation}\label{eq:normalizer-rlhf}
    \normalizingconstantrlhf\defeq  \sumoverstrings \plm(\str) \exp \left(\frac{1}{\beta}\reward(\str)\right).
\end{equation}
is the normalizing constant. 

If we take a variational inference perspective of RLHF \citep{levine2018reinforcement,korbak-etal-2022-rl}, then the goal of RLHF is to find an aligned language model $\qalign(\str)$ that minimizes the backward Kullback--Leibler (KL) divergence between $\qalign$ and the ground truth aligned distribution over strings $\palign$:\looseness=-1
\begin{subequations}\label{eq:rlhf-objective}
    \begin{align}
        &\KLdiv{\qalign}{\palign} \\
            &= \sumoverstrings \qalign(\str)  \log \frac{\qalign(\str)}{\palign(\str)}\\
            &= \sumoverstrings \qalign(\str) \log \frac{\qalign(\str) \normalizingconstantrlhf}{\plm(\str) \exp \left(\frac{1}{\beta}\reward(\str)\right)}\\
            &= \log \!\normalizingconstantrlhf + \KLdiv{\qalign}{\plm} -\! \frac{1}{\beta} \expected_{\str \sim \qalign} [\reward(\str)].
    \end{align}
\end{subequations}
This objective can be seen as KL-regularized reinforcement learning \citep{stiennon2020rlhf}.\looseness=-1

Notably, \emph{any} preference-aligned language model can be expressed in the framework of RLHF, even if no explicit reward function was used in training the model. 
As shown by \citet{rafailov2023direct}, for \emph{any} aligned language model $\qalign$ that minimizes the backward-KL objective, we can always construct a ``secret'' reward function $\reward_\qalign$ with:
\begin{equation}\label{eq:secret}
    \reward_\qalign(\str) = \beta \left( \log \frac{\qalign(\str)}{\plm(\str)} + \log \normalizingconstantrlhf \right).
\end{equation}
The implication of this is that the result we prove for RLHF-tuned models can be trivially extended to \emph{any} conditionally aligned language model, like the ones often deployed in controlled text generation \citep{hu2017toward,krause2020gedi,fudge2021,liu2021dexperts,zhang2023survey}.\looseness=-1

\section{Sampling Adaptors}\label{sec:sampling-adaptors}
We are often interested in sampling strings from a language model $\plm(\str)$.
Sampling is usually performed autoregressively, i.e., we sample a symbol $\eostoken \in \eosalphabet \defeq \alphabet \cup \setof{\eos}$ iteratively from $\plm(\cdot \mid \strltt)$ at each time step $t$ until the special \underline{e}nd-\underline{o}f-\underline{s}equence $\eos$ symbol is reached.
Locally normalized sampling adaptors \citep{meister-etal-2023-efficacy} are post-hoc alterations of $\plm(\cdot \mid \strltt)$ that have been shown to dramatically improve the quality of text produced by language models, and are often considered an integral part of a text generation pipeline \citep{wiher-etal-2022-decoding}.
Common examples include top-$\topk$ \citep{fan-etal-2018-hierarchical} and nucleus sampling \citep{holtzman2020curious}.\looseness=-1

Applying a locally normalized sampling adaptor to $\plm(\cdot \mid \strltt)$ at each $t$ is commonly formulated as the composition of two steps. 
First, the application of a \defn{transform function} $\globalsamplingadaptor \colon \simplextoken \rightarrow \Rpostoken$ that maps the distribution over $\eosalphabet$ to a vector of non-negative values.
The transform function is responsible for the core logic of the sampling adaptor, e.g., assigning symbols outside the top-$\topk$ zero probability. 
Second, a normalization step is performed to ensure a valid distribution over $\eosalphabet$, i.e, one where $\sumovereostokens \globalsamplingadaptor(\plm(\eostoken \mid \strltt)) = 1$. 
When a locally normalized sampling adaptor is applied to a language model $\plm$, we can express the induced distribution $\plmlocalsampling$ as\looseness=-1
\begin{align}\label{eq:local-sampling-adaptor}
    \plmlocalsampling(&\str) = \\
     &\frac{\globalsamplingadaptor(\plm(\eos \mid \str))}{\sumovereostokens{\globalsamplingadaptor(\plm(\eostoken \mid \str))}} \frac{\prodstring \globalsamplingadaptor(\plm(\tokent \mid \strltt))}{\prodstring \sumovereostokens \globalsamplingadaptor(\plm(\eostoken \mid \strltt))}\nonumber
\end{align}

Despite their empirical success, locally normalized sampling adaptors have several undesirable theoretical properties. 
First, they can induce a language model that is not \defn{tight}, i.e., one where probability mass can be placed on infinite-length strings \citep{welleck-etal-2020-consistency,du-etal-2023-measure}.
For instance, this can occur if zero probability is placed on $\eos$ at every step $t$. 
Second, the normalization operation at each $t$ can produce unexpected behavior. 
Specifically, because the denominator used to normalize the local distribution at each $t$ is dependent on the probability mass assigned to \emph{other} symbols, a string assigned a higher score with the transform function is not necessarily assigned a higher probability under the induced language model $\plmlocalsampling$.
This can lead to less desirable strings being sampled more frequently. 
See \Cref{app:top-k-example} for an example.

We thus introduce \defn{globally normalized sampling adaptors}, a procedure to sample from a language model that circumvents these issues.
Globally normalized sampling adaptors are also defined with respect to a transform function $\globalsamplingadaptor$, and thus every existing locally normalized sampling adaptor can be mapped to its globally normalized counterpart.
The transform function is similarly applied at each time step to $\plm(\cdot \mid \strltt)$.
But without the normalization step, we define the probability of a string under the induced model $\plmsampling$ as\looseness=-1
\begin{equation}\label{eq:samplingadaptor-stringprob}
    \plmsampling(\str) = \frac{\globalsamplingadaptor\big(\plm(\cdot \mid \str)\big)(\eos) \prodstring \globalsamplingadaptor\big(\plm(\cdot \mid \strltt)\big)(\tokent)}{\normalizingconstantstring}.
\end{equation}
where the normalizing constant $\normalizingconstantstring$ is defined\looseness=-1
\begin{align}\label{eq:samplingadaptor-stringprob-normalizingconstant}
    \normalizingconstantstring &=\\
    & \sumoverstrings \globalsamplingadaptor\big(\plm(\cdot \mid \str)\big)(\eos) \prodstring \globalsamplingadaptor\big(\plm(\cdot \mid \strltt)\big)(\tokent). \nonumber
\end{align}

The language model induced by a globally normalized sampling adaptor is always tight. 
Under the assumption that $\normalizingconstantstring$ is finite \citep{cotterell2023formal}, it is easy to see that $\sumoverstrings \plmsampling(\str) = 1$.
It is also easy to see that 
\begin{equation}
\plmsampling(\str) \propto \globalsamplingadaptor\big(\plm(\cdot \mid \str)\big)(\eos) \prodstring \globalsamplingadaptor\big(\plm(\cdot \mid \strltt)\big)(\tokent).
\end{equation}
That is, the probability of a string under the adapted model is determined only by the transform function without additional renormalization, which makes it convenient to reason about how a choice of transform function can modify the properties of the induced distribution $\plmsampling$.

\naamanchange{We show in \Cref{sec:trade-off-and-adaptors} how globally normalized sampling adaptors can be used to control the probability--quality trade-off.} 
To generate strings from a language model using a globally normalized sampling adaptor, we use the Independent Metropolis--Hastings Algorithm \citep[IMHA;][]{metropolis1953,hastings1970,wang2022metropolis}.\footnote{
Directly sampling from $\plmsampling$ is not possible because it requires computing the constant $\normalizingconstantstring$. 
This sum over the countably infinite set of finite strings is in practice unknown. 
Using IMHA avoids this issue as it only requires the knowledge of the target distribution up to a \emph{multiplicative constant}.\looseness=-1}
The IMHA is a Markov Chain Monte Carlo (MCMC) method that simulates sampling from a target distribution using a proposal distribution.
We detail how the IMHA's acceptance--rejection protocol can be used to generate text with a globally normalized sampling adaptor in \Cref{alg:imh}.
The idea is that by sampling sequentially from the proposal (i.e., the distribution induced by a locally normalized sampling adaptor, $\plmlocalsampling$) and appropriately accepting or rejecting samples, the generated Markov chain (i.e., the sequence of samples) converges to a stationary distribution equal to the target distribution (i.e., the one induced by a globally normalized sampling adaptor, $\plmsampling$).
Convergence is often diagnosed by the absence of autocorrelation between consecutive samples \citep{deonovic2017convergence}.
In the case of categorical variables like strings, this can be measured with Cram{\'e}r's V \citep{ialongo2016}.\looseness=-1

\begin{algorithm}[t]
    \begin{algorithmic}[1]
    \Func{\textsc{IMHA} $\left(N, \plmsampling, \plmlocalsampling\right)$}
        \State $\str \sim \plmlocalsampling$
        \State $\strset \gets \{\}$
        \For{$n \gets 1 \ldots N$}
             \State $\strprime \sim \plmlocalsampling$
             \State $\acceptancer(\str, \strprime) \gets \frac{\plmsampling(\strprime) \, \plmlocalsampling(\str)}{\plmsampling(\str) \, \plmlocalsampling(\strprime)} $
             \State $r \sim U(0,1)$ 
            \If{$\acceptancer > r$}
                \State $\strset \gets \strset \cup \{\strprime\}$ 
                \State $\str \gets \str'$
            \Else{}
                \State $\strset \gets \strset \cup \{\str\}$ 
            \EndIf
        \EndFor
        \State \Return $\strset$
        \EndFunc
    \end{algorithmic}
    \caption{IMH Algorithm.}
\label{alg:imh}
\end{algorithm}
\setlength{\textfloatsep}{4pt}

\section{Theoretical Results}\label{sec:bayesian}
We are now ready to discuss the theoretical contributions of this paper. 
In \Cref{sec:fundamental-trade-off} we begin with a formal argument that there exists a fundamental trade-off between the average log-probability under the prior and the average reward for corpora sampled from an aligned language model. 
Then, in \Cref{sec:trade-off-and-adaptors}, we show how sampling adaptors, by shifting probability mass, can be used to choose any point on this trade-off.
In \Cref{sec:simpsons}, we conclude the section with a discussion of how this trade-off leads to an emergence of Simpson's paradox.\looseness=-1 

\subsection{A Fundamental Trade-off}\label{sec:fundamental-trade-off}
Let $\qalign(\str)$ be an aligned language model such that $\qalign(\str) = \plm(\str \mid \rvA = \good)$.\footnote{This assumption is not strictly necessary, but allows us to discuss the trade-off in terms of the true reward function $\reward$, rather than $\qalign(\str)$'s ``secret'' reward function $\reward_{\qalign}$.}
Also, making use of a $\kleene{\alphabet}$-valued random variable $\rvY$ distributed according to $\qalign$, we define the \defn{pointwise joint entropy} of $\qalign(\str)$ as follows:\looseness=-1
\begin{equation}
    \ent(\rvY, \rvA = \good) \defeq -\sumoverstrings \qalign(\str) \log \qalign(\str).
\end{equation}
Now, we can introduce the $(N, \varepsilon)$-\defn{typical set} of $\qalign$:
\begin{equation}\label{eq:typical-set}
\begin{aligned}
   \typicalset(&\qalign) \defeq \Big\{ \strset \in (\kleene{\alphabet})^N \mid  \\ &  
    \Big|  \ent(\rvY, \rvA = \good) + \frac{\log \qalign(\strset)}{N} \Big| < \varepsilon \Big\},
\end{aligned}
\end{equation}
where $\strset$ is a \defn{corpus} of strings and $\log \qalign(\strset) = \sum_{n=1}^N \log \qalign(\strn)$ for some $\varepsilon > 0$.
In words, $\typicalset(\qalign)$ is the set of corpora of size $N$, i.e., bags of strings, each sampled from $\qalign$ with average information content $\frac{\log \qalign(\strset)}{N}$ close to the pointwise joint entropy $\ent(\rvY, \rvA = \good)$.\looseness=-1

This notion of typicality is useful because it can be shown that a sampled corpus $\strset = \setof{\strn}_{n=1}^N$ where $\strn \sim \qalign$ falls in $\typicalset(\qalign)$ with high probability. 
Let $\rvI = -\log \qalign(\rvY)$ be a random variable that denotes the information content of a string, and let $\var(\rvI)$ denote its variance.\footnote{$\var(\rvI)$, in the few papers that treat it directly, 
is often called the \defn{varentropy} \citep{varentropy}.\looseness=-1}
Then, with Chebyshev's inequality we can show that:\looseness=-1 
\begin{equation}\label{eq:chebyschev}
     \prob\mleft( \strset \notin  \typicalset(\qalign)\mright) \leq \frac{\var(\rvI)}{N \varepsilon^2} = \bigo \mleft(\frac{1}{N}\mright).
\end{equation}
The full derivation can be found in \Cref{app:proof-chebyshev}.
What \Cref{eq:chebyschev} says is that if we observe a set of $N$ strings, the probability that the corpus lies outside $\typicalset(\qalign) \rightarrow 0$ as $N\rightarrow \infty$.
Equivalently, this is to say that the \defn{sample entropy} $-\frac{\log \qalign(\strset)}{N}$ collapses around the entropy $\ent(\rvY, \rvA = \good)$ with high probability when $N$ is large. 

The above derivation is standard.
However, what is less standard is the application of Bayes' rule to show that
strings in the typical set display a fundamental trade-off.

\begin{restatable}[Probability--quality trade-off]{proposition}{tradeoff} \label{prop:trade-off}
\begin{equation}\label{eq:proof-trade-off-typical-set}
    \prob\left(\Big|C + \frac{\log \plm(\strset)}{N} + \frac{\reward(\strset)}{\beta N} \Big| < \varepsilon\right) > 1 - \delta 
\end{equation}
where $\delta = \bigo(\frac{1}{N})$ and $C \defeq \ent(\rvY \mid \rvA = \good) - \log \normalizingconstantrlhf$ is a constant, and we use the shorthands $\log \plm(\strset) = \sum_{n=1}^N \log \plm(\strn)$ and $\reward(\strset) = \sum_{n=1}^N \reward(\strn)$.\looseness=-1
\end{restatable}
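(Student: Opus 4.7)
The plan is to combine two ingredients: the typical-set concentration result for $\qalign$ already derived just above the statement in \Cref{eq:chebyschev}, and the Bayes-rule rewriting of $\qalign$ in terms of $\plm$ and $\reward$ supplied by \Cref{eq:rlhf-bayes}. Once both are on the table, the proposition reduces to algebraic bookkeeping: the first asserts concentration of $\tfrac{1}{N}\log\qalign(\strset)$ around a constant, and the second rewrites $\tfrac{1}{N}\log\qalign(\strset)$ as the sum of $\tfrac{1}{N}\log\plm(\strset)$ and $\tfrac{1}{\beta N}\reward(\strset)$ plus a constant.

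Concretely, I would first invoke Chebyshev as in \Cref{eq:chebyschev} applied to the i.i.d.\ sequence $\rvI_n \defeq -\log\qalign(\strn)$, yielding
\begin{equation*}
    \prob\mleft(\,\Big|\ent(\rvY, \rvA = \good) + \tfrac{1}{N}\log\qalign(\strset)\Big| < \varepsilon\mright) > 1 - \bigo(1/N).
\end{equation*}
I would then take the logarithm of \Cref{eq:rlhf-bayes}, using the assumption $\qalign = \palign$, to get
\begin{equation*}
    \log\qalign(\str) = \log\plm(\str) + \tfrac{1}{\beta}\reward(\str) - \log\normalizingconstantrlhf,
\end{equation*}
and average this identity over the $N$ strings of $\strset$, which gives
\begin{equation*}
    \tfrac{1}{N}\log\qalign(\strset) = \tfrac{1}{N}\log\plm(\strset) + \tfrac{1}{\beta N}\reward(\strset) - \log\normalizingconstantrlhf.
\end{equation*}
Substituting this identity into the typical-set bound and collecting constants into $C \defeq \ent(\rvY\mid\rvA=\good) - \log\normalizingconstantrlhf$ produces exactly the stated inequality.

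I do not expect a deep obstacle: both ingredients are already in the excerpt and the interpolation between them is deterministic. The one subtlety worth flagging is that the $\bigo(1/N)$ rate is inherited from Chebyshev and therefore implicitly requires $\var(\rvI) < \infty$; in regimes where the varentropy is uncontrolled the bound is loose, which is precisely where the tighter Chernoff-based AEP advertised in the introduction would replace \Cref{eq:chebyschev}, preserving the algebraic structure of the proof while strengthening the concentration rate.
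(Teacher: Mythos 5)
Your proposal is correct and follows essentially the same route as the paper's own proof: the paper likewise sums the identity $\log \qalign(\str) = \log \plm(\str) + \tfrac{1}{\beta}\reward(\str) - \log \normalizingconstantrlhf$ (stated there via \cref{eq:secret}, which is just the rearrangement of \cref{eq:rlhf-bayes} you use) over the corpus, substitutes into the typical-set definition, and invokes the Chebyshev bound of \cref{eq:chebyschev} to obtain the $1-\delta$ guarantee for $N \ge \var(\rvI)/(\delta\varepsilon^2)$. Your closing remark about the finite-varentropy requirement matches the assumptions the paper flags explicitly.
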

\noindent
\Cref{prop:trade-off} says that a corpus $\strset$ of size $N$ sampled from $\qalign$ will have its average log-probability  $\frac{\log \plm(\strset)}{N}$ and average reward $\frac{\reward(\strset)}{\beta N}$ bound by a constant \emph{with high probability}. 
The implication of this is that the two quantities will trade off \emph{linearly}.

\begin{proofsketch}
\Cref{prop:trade-off} follows relatively straightforwardly from \Cref{eq:typical-set} and \Cref{eq:chebyschev}, when one observes from \Cref{eq:secret} that $\qalign(\str) \propto \plm(\str) \exp\mleft( \frac{1}{\beta} \reward(\str)\mright)$, which implies $\log(\qalign) = \log \plm(\str) + \frac{1}{\beta}\reward(\str) + \mathrm{constant}$.
Recall that corpora in the typical set $\typicalset(\qalign)$ have average information content $\frac{\log \qalign(\str)}{N}$ close to the \emph{constant} pointwise joint entropy $\ent(\rvY, \rvA = \good)$ (\Cref{eq:typical-set}). 
That is, typical corpora \emph{by definition} exhibit a trade-off between average log-probability under the prior $\frac{\log \plm(\str)}{N}$ and average reward $\frac{\reward(\str)}{N}$. 
Then, due to Chebyshev's inequality in \Cref{eq:chebyschev}, we have $\Big| C + \frac{\log \plm(\strset)}{N}  + \frac{\reward(\strset)}{\beta N} \Big| < \varepsilon$ with probability at least $\mleft( 1 - \frac{\var(\rvI)}{N \varepsilon^2} \mright)$ for all $N$ and $ \varepsilon > 0$.
When $N \ge \frac{\var(\rvI)}{ \delta \varepsilon^2}$ for some $\delta = \bigo(\frac{1}{N})$, the above holds  with probability at least $1 - \delta$, i.e., \emph{with high probability}, and we arrive at the proposition.
The full proof is provided in \Cref{app:proof-trade-off}.
\end{proofsketch}

Strictly speaking, \Cref{prop:trade-off} describes a trade-off between the average log prior probability and the average reward.
However, because the reward function is often used to reflect human preferences for various notions of quality, e.g., helpfulness or concision \citep{perez-etal-2022-red,ethayarajh2022}, we can interpret this result as a probability--quality trade-off.\looseness=-1

\paragraph{Assumptions.}
\Cref{prop:trade-off} relies on two key assumptions. 
First, that $\qalign$ has finite entropy.\footnote{
In general, this is not true.
See \Cref{sec:inf-entropy-lms} for an example.\looseness=-1
} 
Second, that the variance of information content of a string is also finite, i.e., $\var(\rvI) < +\infty$.
We argue that neither of these assumptions are limiting in practice because we show in \Cref{prop:boundedrenyi} and \Cref{prop:varrvibounded} that they hold for all Transformer-based language models, which constitute the base architecture for most modern models \citep{brown2020,touvron2023llama}.\looseness=-1

\paragraph{A Tighter Bound.}
We remark that there exists a tighter bound for the probability--quality trade-off than the $\bigo(\frac{1}{N})$ one in \Cref{eq:chebyschev} for specific types of language models. 
Specifically, we show in \Cref{sec:chernoff} that for transformer and $n$-gram based language models, the probability that sampled corpora land in the typical set $\typicalset(\qalign)$ and exhibit the trade-off grows \emph{exponentially} quickly, i.e., the bound is $\bigo\mleft(\exp(-cN)\mright)$ for some constant $c \in \Rpos$.

\subsection{Controlling the Trade-off}\label{sec:trade-off-and-adaptors}
Ideally, we would like to choose how much probability we trade for quality when sampling corpora from an aligned language model.
After all, depending on the context, it may be desirable to extract higher-reward text (e.g., to improve alignment) or lower-reward text \citep[e.g., to combat overfitting of the reward function;][]{azar2023general,gao2023scalingreward,wang2024secrets,he2024improving}.

\anejchange{We can leverage sampling adaptors (\Cref{sec:sampling-adaptors}) to exercise this control.
The intuition for this comes from analyzing the effect of a specific sampling adaptor on the prior string probability.
That is, in \cref{app:global-sampling-adaptors}, we show for particular sampling adaptor that
\begin{subequations}
\begin{align}\label{eq:sampling-adaptor-decomp-brief}
    \palignsampling(\str) &\propto \fprior \plm(\str) \exp\left(\frac{1}{\beta}\reward(\str)\right) \\ 
    & = \plmsampling(\str) \exp\left(\frac{1}{\beta}\reward(\str)\right),
\end{align}
\end{subequations}
where $\palignsampling$ is the induced distribution when applying a sampling adaptor to an aligned model $\palign$ and $\fprior$ refers to the aggregated series of truncation functions coming from the transform function $\globalsamplingadaptor$ that rely only on the \emph{prior} $\plm(\str)$.
\Cref{eq:sampling-adaptor-decomp-brief} says that in some special cases, the effect of applying a globally normalized sampling adaptor to $\palign$ is akin to applying it to the prior language model $\plm$ and then multiplying the result by the likelihood that the generated string aligns with human preferences, i.e., the effects of the sampling adaptor can be pushed to the prior.
}

\anejchange{Given the probability--quality trade-off, this suggests that an appropriate choice of sampling adaptor can be used to control the average log-probability of sampled corpora, which then determines the average reward of generated text.
For example, we could use the temperature sampling with a high temperature to produce lower probability (and thus higher reward) strings.
This intuition proves to be useful and leads to an efficient way to control the trade-off, as we demonstrate in \Cref{sec:langmodelexperiments}.}

\subsection{The Emergence of Simpson's Paradox}\label{sec:simpsons}
Following \citet{lim2024simpsons}, we now argue that the trade-off described in \Cref{prop:trade-off}---under appropriate conditions---can lead to the emergence of Simpson's paradox. 
Specifically, the paradox emerges when the reward $\reward(\str)$ is \textit{a priori} positively correlated with string likelihood under the prior language model $\plm(\str)$.
This is not always the case, of course.
However, we should expect it to be true when reward scores reflect the quality of text and the language model is well-calibrated.
Thus, if we consider samples $\str \sim \qalign$, we should expect $\log \plm(\str)$ to be positively correlated with $\reward(\str)$ \emph{by assumption}.
This correlation exists at the string level.\looseness=-1

Simultaneously, in \Cref{prop:trade-off} we showed that an \emph{anti}-correlation arises from the trade-off between the average log-probability $\frac{\log \plm(\str)}{N}$ and average reward $\frac{\reward(\str)}{\beta N}$.
The positive correlation between probability and quality at the level of strings, reversed at the level of corpora, is 
precisely an instance of Simpson's paradox.\looseness=-1

\section{Experimental Setup}
We conduct two experiments in the empirical portion of this paper. 
First, in \Cref{sec:toy} we validate the predictions of \Cref{prop:trade-off} with toy language and reward models.
Then, in \Cref{sec:langmodelexperiments} we demonstrate that this trade-off exists in practice for open-sourced RLHF-tuned models and that globally normalized sampling adaptors can control where on this trade-off the corpus of generated text will lie.
We also examine models aligned with Direct Preference Optimization \citep[DPO;][]{rafailov2023direct} in \Cref{sec:langmodelexperiments-dpo}, as RLHF and DPO have the same objective.\looseness=-1

\subsection{A Toy Experiment}\label{sec:toy}
\begin{figure*}[ht!]
    \centering
    \begin{minipage}[t]{0.45\textwidth}
        \centering
        \includegraphics[width=\textwidth,keepaspectratio]{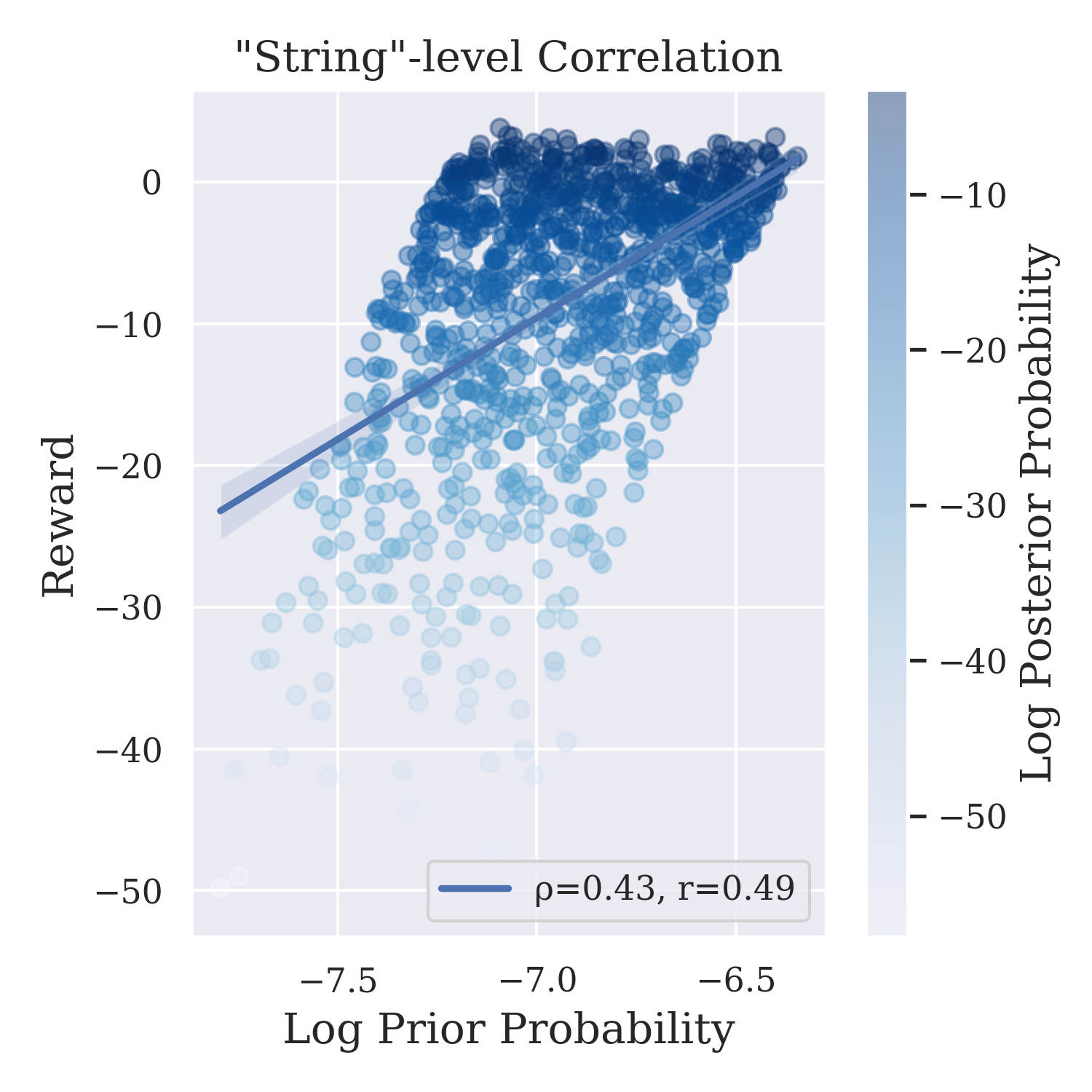}
    \end{minipage}%
    ~ 
    \begin{minipage}[t]{0.45\textwidth}
        \centering
        \includegraphics[width=\textwidth,keepaspectratio]{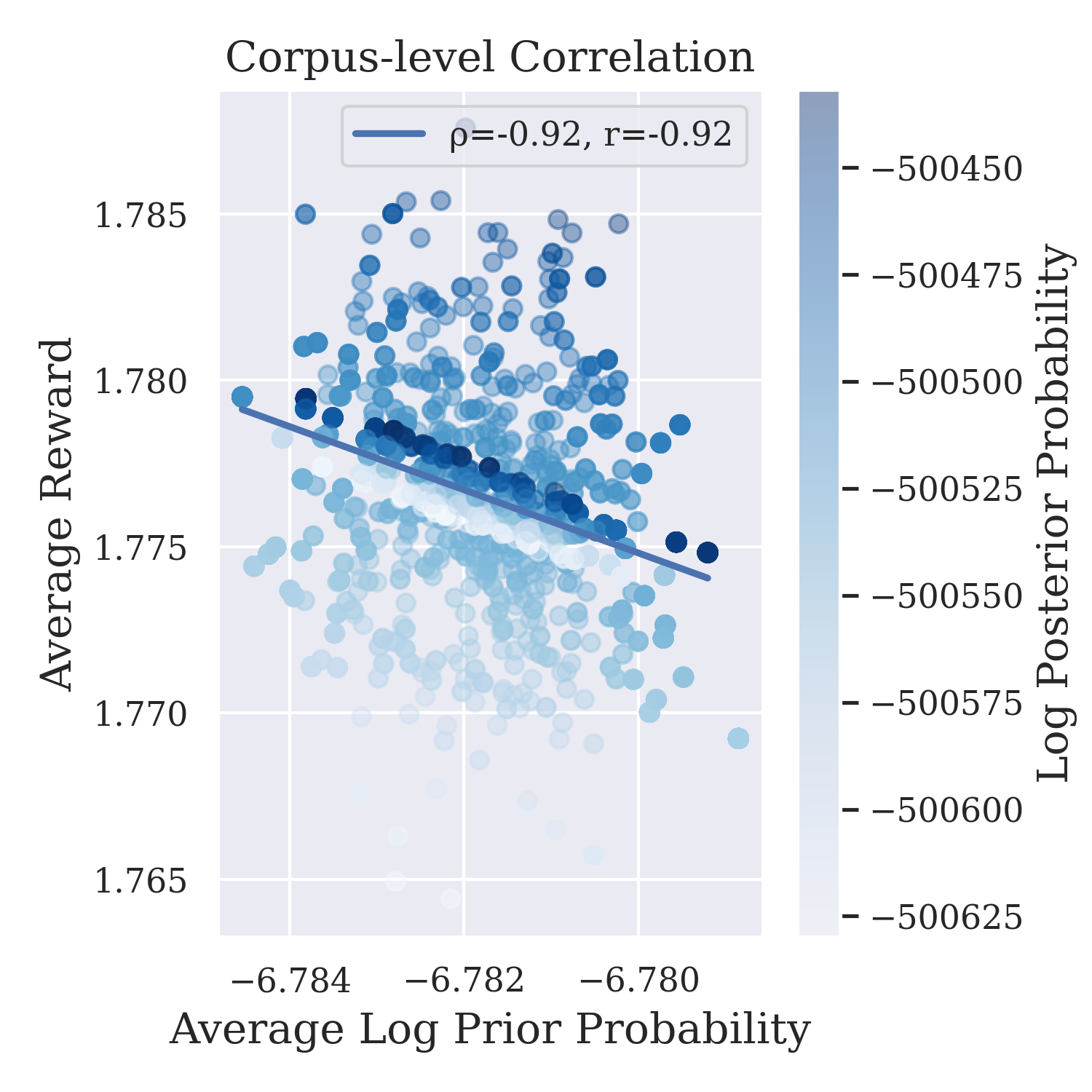}
    \end{minipage}
    \caption{Illustration of the probability--quality trade-off with toy data, where quality is measured by the reward function. (Left) ``String''-level correlations between probability and reward, where strings are mimicked by arbitrary objects. (Right) Corpus-level correlations between average log-probability and average reward. We include a best-fit line for corpora in the typical set, i.e., those with sample entropy close to $\ent(\palign)$. In both figures, the log-probability of each string or corpus is coloured according to high (dark) and low (light). }
    \label{fig:toy-simpsons}
    \vspace{-6pt}
\end{figure*}

The trade-off described in \Cref{prop:trade-off} fundamentally arises as a consequence of typicality and the fact that $\palign(\str) \propto \plm(\str) \exp(\frac{1}{\beta}\reward(\str))$.
We aim to demonstrate these theoretical principles with an easily reproducible toy experiment, where we model $\palign$ and $\plm$ as language models with support only over a finite subset of $\kleene{\alphabet}$.

\paragraph{Modeling $\palign$, $\plm$ and $\reward$.}
\naamanchange{Let $\domain \subset \kleene{\alphabet}$ be a finite set of $|\domain| = 1,000$ strings.\footnote{We arbitrarily identify $\domain$ with $\setof{1, 2, \ldots 1000}$.} 
With this, let us construct a toy aligned language model $\palign(\str)$ by sampling a distribution over $\domain$ from the Dirichlet distribution parameterized by a $|\domain|$-sized vector where every value is set to 0.1.
We create the prior $\plm(\str)$ by applying the softmax to a scaled and noised version of this distribution. 
That is, we define $\plm(\str) \defeq \mathrm{softmax}\big(\palign(\cdot)^{\frac{1}{\temp}} + \epsilon)\big)\left(\str\right)$ where $\epsilon \sim U(-\kappa, \kappa)$ and $\temp \in \Rpos$, $\kappa \in \Rpos$ are our hyperparameters. 
We then define $\reward(\str)$ analogously to \Cref{eq:secret} with $\reward(\str) = \log \frac{\palign(\str)}{\plm(\str)}$.
The distributions of $\palign(\str)$, $\plm(\str)$ and $\reward(\str)$ over the domain $\domain$ can be seen in \Cref{fig:toy}.
To induce Simpson's paradox, we tune $\kappa$ and $\temp$ such that they are positively correlated, shown in \Cref{fig:toy-simpsons} on the left.\looseness=-1}

\paragraph{Constructing Corpora with Causal Bootstrapping.}
An important point about the trade-off in \Cref{prop:trade-off} is that it occurs \emph{with high probability}.
To illustrate this, we use causal bootstrapping \citep{little2020causal} to construct corpora that are uniformly distributed across 10 bands of average log-probability under the prior.\footnote{This scheme allows us to observe low-probability corpora.}
\naamanchange{Then, we compute and visualize the corpus probabilities, i.e., $\log \palign(\strset)$ where $\strset \defeq \setof{ \str^{(n)}}_{n=1}^{N}$.} 
Due to \Cref{prop:trade-off}, we expect to see that corpora exhibiting the trade-off have much higher probability than those that do not.
We examine 1,000 corpora sampled this way, each with 100k samples.\looseness=-1

\subsection{The Trade-off in Practice}\label{sec:langmodelexperiments}
Here we demonstrate the existence of the probability--quality trade-off with an open-sourced aligned language model based on the Llama 2 family \citep{touvron2023llama}.
Using locally normalized sampling adaptors, we sample a corpus $\strset$ of 2,000 texts from an RLHF-tuned model $\qalign$.
Towards this, we randomly choose 1,000 prompts using the helpfulness dataset from \citet{perez-etal-2022-red} and for each prompt, we produce two generations.
Then, for every string in this corpus, we obtain its log-probability under the prior language model $\log \plm(\str)$ and its reward $\reward(\str)$. 
The prior and reward models are the same as those used to train $\qalign$ in an RLHF scheme.
We repeat this using five locally normalized sampling adaptors at five temperatures, totaling 25 sampling schemes and thus $50,000$ $(\log \plm(\str), \reward(\str))$ pairs.
To observe the trade-off, we compute the Pearson and Spearman's correlation between $\log \plm(\str)$ and $\reward(\str)$ at the string level, and between $\frac{\log \plm(\strset)}{N}$ and $\frac{\reward(\strset)}{N}$ at the corpus level.\looseness=-1

\paragraph{Resampling Corpora with the IMHA.}
To compute corpus-level correlations we require a lot of data points of the average log-probability and average reward. 
However, because sampling multiple corpora is prohibitively expensive, we use standard bootstrap resampling \citep{efron1994bootstrap} to create multiple corpora for each of the 25 sampling schemes.
Given a corpus of strings generated from $\qalign$ with a locally normalized sampling adaptor defined by the transform function $\globalsamplingadaptor$, we resample uniformly with replacement $N$ times, accepting and rejecting each as described in \Cref{alg:imh}.
This gives us a resampled corpus $\strset'$.
Then, we compute the average log-likelihood $\frac{\log \plm(\strset')}{N}$ and average reward $\frac{\reward(\strset')}{N}$.
We do this 2,000 times per sampling scheme, giving us a total of 50,000 $\mleft(\frac{\log \plm(\strset)}{N},\frac{\reward(\strset')}{N}\mright)$ pairs, which we then use to we compute the corpus-level correlations.
We set $N = 200,000$ as preliminary experiments showed that for $N \ge 200,000$ the IMHA converges, i.e., the autocorrelation measured with Cram{\'e}r's V falls to $< 0.10$.\looseness=-1

\begin{figure*}[ht!]
    \centering
    \includegraphics[width=\textwidth,keepaspectratio]{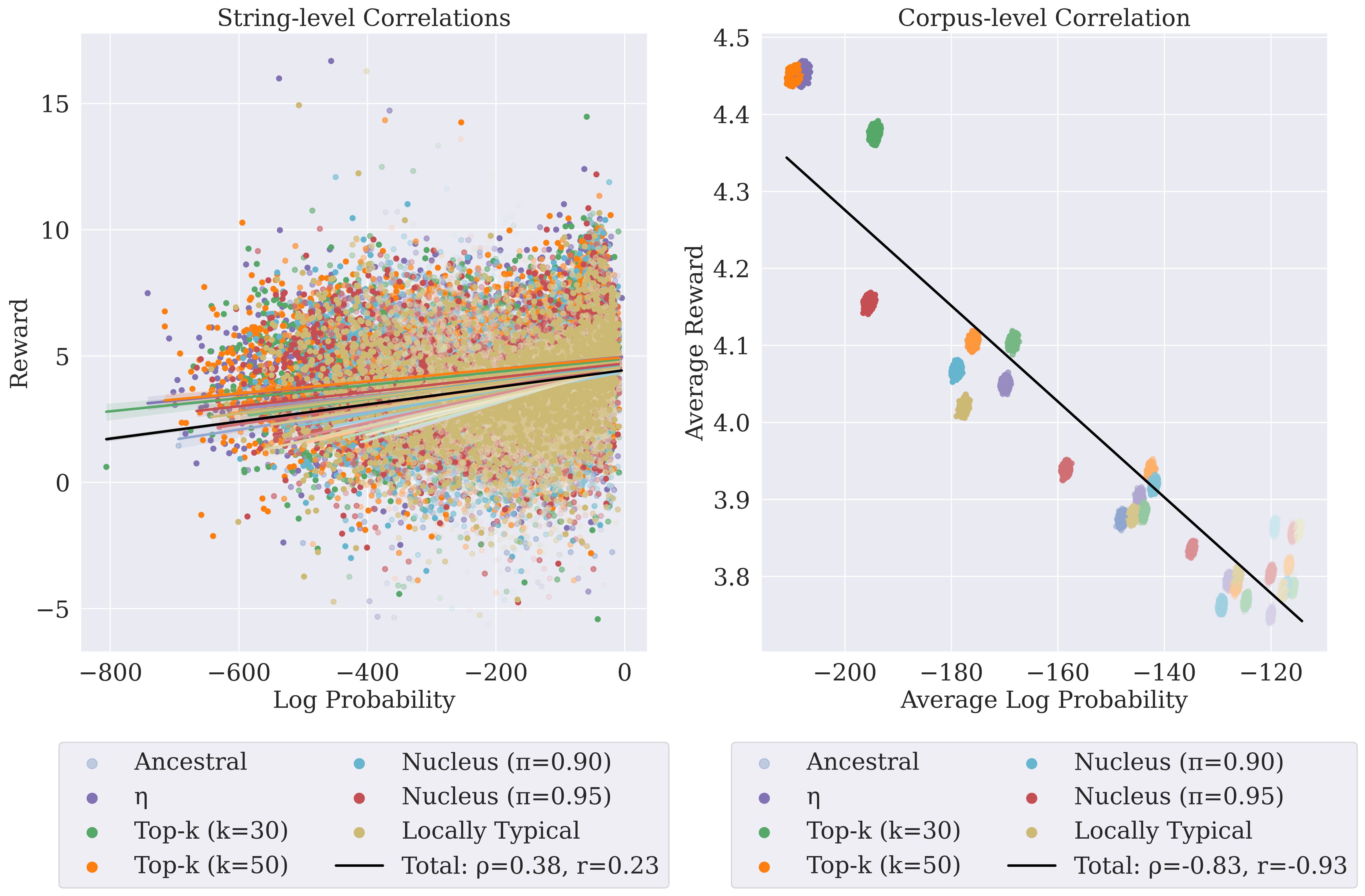}
    \caption{The probability--quality relationship, where quality is measured by the reward function. (Left) String-level correlations between log-probability and quality. (Right) Corpus-level correlations between average log-probability and average quality, with corpora created by different sampling adaptors. Higher intensity of the colours denote higher temperatures used with the sampling adaptor.\looseness=-1}
    \label{fig:simpsons}
\end{figure*}

\paragraph{Sampling adaptors.}
The five sampling schemes we examine are: top-$\topk$ sampling \citep{fan-etal-2018-hierarchical} for $\topk \in \setof{30, 50}$, nucleus sampling \citep{holtzman2020curious} for $\topp \in \setof{0.90, 0.95}$, $\eta$-sampling \citep{hewitt-etal-2022-truncation} and locally-typical sampling \citep{meister-etal-2023-locally}. 
For each setting, we examine five temperatures $\temp \in \setof{0.5, 0.75, 1.0, 1.25, 1.5}$.
This gives us a total of 25 settings that cover various real-world use cases.
As a baseline, we also include ancestral sampling.\looseness=-1

\paragraph{Models.}
We utilize the family of 7B reward, RLHF-tuned and prior language models from \citet{rando2024universal} based on Llama 2 7B \citep{touvron2023llama}.
Specifically, we use the baseline reward and RLHF-tuned models trained on the helpfulness dataset from \citet{perez-etal-2022-red}.\looseness=-1

\section{Results}\label{sec:results}
Our results confirm our theoretical findings in \Cref{sec:bayesian}.

\paragraph{A Strong Anti-correlation.}
In both toy and empirical settings, we observe at the corpus level a strong linear anti-correlation between the average log-probability $\frac{\log \plm(\strset)}{N}$ and reward $\frac{\reward(\strset)}{N}$.
In the toy experiment, as shown in \Cref{fig:toy-simpsons} corpora in the typical set
have average log-probabilities and average rewards that exhibit a Pearson correlation of $\pearson=-0.92$. 
And, importantly, these typical corpora occur with significantly higher probability than those that do not. 
For example, the median log-probability difference between a corpus in and out of the typical set is $10^{84}$ fold.\footnote{$-500,523$ vs. $-500,718$; the difference is somewhat masked by the log-scale.\looseness=-1}
The results in the empirical experiment with real language models are generally consistent with the toy experiment.
We observe a Pearson correlation between the average log-probability and average reward of $\pearson=-0.93$ with relatively few outliers.\looseness=-1

\paragraph{Globally Normalized Sampling Adaptors Control the Trade-off.}
We observe in \Cref{fig:simpsons} that corpora sampled with various globally normalized sampling adaptors are centered at different points on the trade-off and follow qualitatively expected trends.
For example, corpora sampled with different temperatures have average log-probability that follow the expected trend---high-temperature corpora have lower average log-probability and higher average reward.
And, at $\temp=1.0$ all sampling adaptors produce corpora with higher average log-probability and lower average reward than ancestral sampling.
These results are expected since lower temperature and the examined sampling adaptors skew the sampling distribution towards high probability strings.
The behaviour when comparing sampling adaptors with different degrees of truncation also follows our expectation, e.g., corpora sampled with nucleus sampling for $\topp = 0.95$ have lower average reward than those sampled with $\topp = 0.90$.
These findings are in line with our theoretical exposition in \Cref{sec:trade-off-and-adaptors} and suggest that we can use sampling adaptors to control the average reward of sampled corpora.\looseness=-1

\paragraph{Local vs. Global Normalization.}
In practice, we find that the tradeoff can be controlled by directly using locally normalized sampling adaptors, i.e., without applying the IMHA post-hoc to derive a globally normalized sampling adaptor. 
First, we see similar results when we repeat the experiment in \Cref{sec:langmodelexperiments} without applying the IMHA's acceptance-rejection protocol. 
Second, for every locally normalized sampling adaptor over $95\%$ of the samples are accepted by the IMHA.
An acceptance rate of $100\%$ implies that the proposal distribution (in this case, the distribution induced by the locally normalized sampling adaptor) is equal to the target distribution \citep{metropolis1953, hastings1970,deonovic2017convergence,wang2022metropolis}.

\paragraph{Simpson's Paradox.}
We observe in both toy (\Cref{fig:toy-simpsons}) and empirical data (\Cref{fig:simpsons}) the emergence of Simpson's paradox.
At the string-level, we measure rank correlations of $\spearman=0.43$ in the toy setting $\spearman=0.38$ in the empirical setting. 
In the latter case, this positive correlation is probably explained by the fact that the reward model is trained to model preferences of \emph{helpfulness} and the prior Llama 2 7B model has likely seen related texts in its training data.
In both settings, we simultaneously find an anti-correlation at the corpus level between average log-probability and average reward. 
These results are consistent with our expectations in \Cref{sec:simpsons}---the reversal emerges because the trade-off arises out of typicality, independently of the true relationship between probability and quality at the string level.\looseness=-1

\section{Conclusion}
Our work examines the relationship between probability and reward in sampling from RLHF-tuned language models.
We provide a formal argument and empirical evidence of a trade-off between these two quantities when generating text at scale.
Notably, this trade-off exists as a consequence of typicality, is independent of the relationship between reward and probability at the string level, and applies to \emph{any} conditionally aligned language model, not just those aligned with RLHF.

Moreover, we have proposed globally normalized sampling adaptors, and demonstrate their utility for selecting how much likelihood we exchange for reward.
We also find that locally normalized sampling adaptors are good approximations of their globally normalized counterparts in practice, and can be directly used to control the trade-off.
Altogether, these findings present a new direction of research for improving reward alignment or mitigating reward overfitting in RLHF-tuned models, and the development of sampling methods for conditional text generation.\looseness-1

\section*{Limitations}
There are three main limitations to our work.
First, is that we only conduct empirical analysis for English and Transformer-based language models.
Second, we don't experiment over all sampling adaptors, e.g., we did not consider Mirostat-sampling \citep{basu2021mirostat} or contrastive search decoding \citep{su2022a} in our experiments. 
These choices were made because the theory holds independently of these factors, though further work should consider other model architectures, sampling adaptors and models that span a variety of languages and domains.
Finally, we have only examined the probability--quality relationship under the paradigm of RLHF (and equivalently, DPO, as we show in \Cref{sec:langmodelexperiments-dpo}), but not other alignment methods like ORPO \citep{hong2024orpo}, ODPO \citep{amini2024directpreferenceoptimizationoffset} or variational BoN \citep{amini2024variationalbestofnalignment}. 
We leave those to future work.\looseness=-1

\section*{Ethics Statement}
We do not foresee any ethical implications.

\section*{Acknowledgments}
This research is supported by the Singapore Ministry of Education Academic Research Fund Tier 1 (T1 251RES2216).
Josef Valvoda is funded by the Nordic Programme for Interdisciplinary Research Grant 105178 and the Danish National Research Foundation Grant no. DNRF169.
Anej Svete is supported by the ETH AI Center Doctoral Fellowship.

\bibliography{custom}

\begin{thebibliography}{64}
\expandafter\ifx\csname natexlab\endcsname\relax\def\natexlab#1{#1}\fi

\bibitem[{Amini et~al.(2024{\natexlab{a}})Amini, Vieira, and
  Cotterell}]{amini2024directpreferenceoptimizationoffset}
Afra Amini, Tim Vieira, and Ryan Cotterell. 2024{\natexlab{a}}.
\newblock \href {https://arxiv.org/abs/2402.10571} {Direct preference
  optimization with an offset}.
\newblock \emph{arXiv preprint arXiv:2402.10571}.

\bibitem[{Amini et~al.(2024{\natexlab{b}})Amini, Vieira, and
  Cotterell}]{amini2024variationalbestofnalignment}
Afra Amini, Tim Vieira, and Ryan Cotterell. 2024{\natexlab{b}}.
\newblock \href {http://arxiv.org/abs/arXiv preprint arXiv:2407.06057}
  {Variational best-of-n alignment}.

\bibitem[{Azar et~al.(2023)Azar, Rowland, Piot, Guo, Calandriello, Valko, and
  Munos}]{azar2023general}
Mohammad~Gheshlaghi Azar, Mark Rowland, Bilal Piot, Daniel Guo, Daniele
  Calandriello, Michal Valko, and Rémi Munos. 2023.
\newblock \href {https://arxiv.org/abs/2310.12036} {A general theoretical
  paradigm to understand learning from human preferences}.
\newblock \emph{arXiv preprint arXiv:2310.12036}.

\bibitem[{Baer(2008)}]{baer-infinite-entropy}
Michael Baer. 2008.
\newblock \href {https://www.mbbaer.com/Hinf.pdf} {A simple countable
  infinite-entropy distribution}.
\newblock Https://www.mbbaer.com/Hinf.pdf.

\bibitem[{Banchs et~al.(2015)Banchs, D’Haro, and
  Li}]{Banchs2015AdequacyFluencyME}
Rafael~E. Banchs, L.~F. D’Haro, and Haizhou Li. 2015.
\newblock \href {https://api.semanticscholar.org/CorpusID:13391401}
  {Adequacy–fluency metrics: Evaluating {MT} in the continuous space model
  framework}.
\newblock \emph{IEEE/ACM Transactions on Audio, Speech, and Language
  Processing}, 23:472--482.

\bibitem[{Basu et~al.(2021)Basu, Ramachandran, Keskar, and
  Varshney}]{basu2021mirostat}
Sourya Basu, Govardana~Sachitanandam Ramachandran, Nitish~Shirish Keskar, and
  Lav~R. Varshney. 2021.
\newblock \href {https://openreview.net/forum?id=W1G1JZEIy5_} {Mirostat: {A}
  neural text decoding algorithm that directly controls perplexity}.
\newblock In \emph{International Conference on Learning Representations}.

\bibitem[{Bradley and Terry(1952)}]{bradley1952}
Ralph~Allan Bradley and Milton~E. Terry. 1952.
\newblock \href {https://api.semanticscholar.org/CorpusID:125209808} {Rank
  analysis of incomplete block designs: I. the method of paired comparisons}.
\newblock \emph{Biometrika}, 39:324.

\bibitem[{Brown et~al.(2020)Brown, Mann, Ryder, Subbiah, Kaplan, Dhariwal,
  Neelakantan, Shyam, Sastry, Askell, Agarwal, Herbert-Voss, Krueger, Henighan,
  Child, Ramesh, Ziegler, Wu, Winter, Hesse, Chen, Sigler, Litwin, Gray, Chess,
  Clark, Berner, McCandlish, Radford, Sutskever, and Amodei}]{brown2020}
Tom Brown, Benjamin Mann, Nick Ryder, Melanie Subbiah, Jared~D Kaplan, Prafulla
  Dhariwal, Arvind Neelakantan, Pranav Shyam, Girish Sastry, Amanda Askell,
  Sandhini Agarwal, Ariel Herbert-Voss, Gretchen Krueger, Tom Henighan, Rewon
  Child, Aditya Ramesh, Daniel Ziegler, Jeffrey Wu, Clemens Winter, Chris
  Hesse, Mark Chen, Eric Sigler, Mateusz Litwin, Scott Gray, Benjamin Chess,
  Jack Clark, Christopher Berner, Sam McCandlish, Alec Radford, Ilya Sutskever,
  and Dario Amodei. 2020.
\newblock \href
  {https://proceedings.neurips.cc/paper_files/paper/2020/file/1457c0d6bfcb4967418bfb8ac142f64a-Paper.pdf}
  {Language models are few-shot learners}.
\newblock In \emph{Advances in Neural Information Processing Systems},
  volume~33, pages 1877--1901. Curran Associates, Inc.

\bibitem[{Callison-Burch et~al.(2007)Callison-Burch, Fordyce, Koehn, Monz, and
  Schroeder}]{callison-burch-etal-2007-meta}
Chris Callison-Burch, Cameron Fordyce, Philipp Koehn, Christof Monz, and Josh
  Schroeder. 2007.
\newblock \href {https://aclanthology.org/W07-0718} {({M}eta-)evaluation of
  machine translation}.
\newblock In \emph{Proceedings of the Second Workshop on Statistical Machine
  Translation}, pages 136--158, Prague, Czech Republic. Association for
  Computational Linguistics.

\bibitem[{Christiano et~al.(2017)Christiano, Leike, Brown, Martic, Legg, and
  Amodei}]{christiano2017deep}
Paul~F Christiano, Jan Leike, Tom Brown, Miljan Martic, Shane Legg, and Dario
  Amodei. 2017.
\newblock \href
  {https://proceedings.neurips.cc/paper_files/paper/2017/file/d5e2c0adad503c91f91df240d0cd4e49-Paper.pdf}
  {Deep reinforcement learning from human preferences}.
\newblock In \emph{Advances in Neural Information Processing Systems},
  volume~30. Curran Associates, Inc.

\bibitem[{Cotterell et~al.(2023)Cotterell, Svete, Meister, Liu, and
  Du}]{cotterell2023formal}
Ryan Cotterell, Anej Svete, Clara Meister, Tianyu Liu, and Li~Du. 2023.
\newblock \href {https://arxiv.org/abs/2311.04329} {Formal aspects of language
  modeling}.
\newblock \emph{arXiv preprint arXiv:2311.04329}.

\bibitem[{Cover and Thomas(2006)}]{elements-of-info-theory}
Thomas~M. Cover and Joy~A. Thomas. 2006.
\newblock \href {https://onlinelibrary.wiley.com/doi/book/10.1002/047174882X}
  {\emph{Elements of Information Theory (Wiley Series in Telecommunications and
  Signal Processing)}}.
\newblock Wiley-Interscience, USA.

\bibitem[{Deonovic and Smith(2017)}]{deonovic2017convergence}
Benjamin~E. Deonovic and Brian~J. Smith. 2017.
\newblock \href {https://arxiv.org/abs/1706.04919} {Convergence diagnostics for
  {MCM} draws of a categorical variable}.
\newblock \emph{arXiv preprint arXiv:1706.04919}.

\bibitem[{Devlin et~al.(2019)Devlin, Chang, Lee, and
  Toutanova}]{devlin2019bert}
Jacob Devlin, Ming-Wei Chang, Kenton Lee, and Kristina Toutanova. 2019.
\newblock \href {https://api.semanticscholar.org/CorpusID:52967399} {{BERT}:
  {P}re-training of deep bidirectional transformers for language
  understanding}.
\newblock In \emph{North American Chapter of the Association for Computational
  Linguistics}.

\bibitem[{Du et~al.(2023)Du, Torroba~Hennigen, Pimentel, Meister, Eisner, and
  Cotterell}]{du-etal-2023-measure}
Li~Du, Lucas Torroba~Hennigen, Tiago Pimentel, Clara Meister, Jason Eisner, and
  Ryan Cotterell. 2023.
\newblock \href {https://doi.org/10.18653/v1/2023.acl-long.543} {A
  measure-theoretic characterization of tight language models}.
\newblock In \emph{Proceedings of the 61st Annual Meeting of the Association
  for Computational Linguistics (Volume 1: Long Papers)}, pages 9744--9770,
  Toronto, Canada. Association for Computational Linguistics.

\bibitem[{Efron and Tibshirani(1994)}]{efron1994bootstrap}
Bradley Efron and R.~J. Tibshirani. 1994.
\newblock \href
  {https://www.taylorfrancis.com/books/mono/10.1201/9780429246593/introduction-bootstrap-bradley-efron-tibshirani}
  {\emph{An Introduction to the Bootstrap}}, 1st edition.
\newblock Chapman and Hall/CRC.

\bibitem[{Ethayarajh et~al.(2022)Ethayarajh, Choi, and
  Swayamdipta}]{ethayarajh2022}
Kawin Ethayarajh, Yejin Choi, and Swabha Swayamdipta. 2022.
\newblock \href {https://proceedings.mlr.press/v162/ethayarajh22a.html}
  {Understanding dataset difficulty with $\mathcal{V}$-usable information}.
\newblock In \emph{Proceedings of the 39th International Conference on Machine
  Learning}, volume 162 of \emph{Proceedings of Machine Learning Research},
  pages 5988--6008. PMLR.

\bibitem[{Fan et~al.(2018)Fan, Lewis, and Dauphin}]{fan-etal-2018-hierarchical}
Angela Fan, Mike Lewis, and Yann Dauphin. 2018.
\newblock \href {https://doi.org/10.18653/v1/P18-1082} {Hierarchical neural
  story generation}.
\newblock In \emph{Proceedings of the 56th Annual Meeting of the Association
  for Computational Linguistics (Volume 1: Long Papers)}, pages 889--898,
  Melbourne, Australia. Association for Computational Linguistics.

\bibitem[{Fradelizi et~al.(2016)Fradelizi, Madiman, and Wang}]{varentropy}
Matthieu Fradelizi, Mokshay Madiman, and Liyao Wang. 2016.
\newblock \href {https://arxiv.org/abs/1508.04093} {Optimal concentration of
  information content for log-concave densities}.
\newblock In \emph{High Dimensional Probability VII}, pages 45--60, Cham.
  Springer International Publishing.

\bibitem[{Gao et~al.(2023)Gao, Schulman, and Hilton}]{gao2023scalingreward}
Leo Gao, John Schulman, and Jacob Hilton. 2023.
\newblock \href {https://proceedings.mlr.press/v202/gao23h.html} {Scaling laws
  for reward model overoptimization}.
\newblock In \emph{Proceedings of the 40th International Conference on Machine
  Learning}, volume 202 of \emph{Proceedings of Machine Learning Research},
  pages 10835--10866. PMLR.

\bibitem[{Graves(2012)}]{graves2012sequence}
Alex Graves. 2012.
\newblock \href {https://arxiv.org/abs/1211.3711} {Sequence transduction with
  recurrent neural networks}.
\newblock \emph{arXiv preprint arXiv:1211.3711}.

\bibitem[{Hastings(1970)}]{hastings1970}
W.~K. Hastings. 1970.
\newblock \href {https://doi.org/10.1093/biomet/57.1.97} {{Monte Carlo sampling
  methods using Markov chains and their applications}}.
\newblock \emph{Biometrika}, 57(1):97--109.

\bibitem[{He et~al.(2024)He, Wang, Jiao, Zhang, Wang, Shi, and
  Tu}]{he2024improving}
Zhiwei He, Xing Wang, Wenxiang Jiao, Zhuosheng Zhang, Rui Wang, Shuming Shi,
  and Zhaopeng Tu. 2024.
\newblock \href {https://arxiv.org/abs/2401.12873} {Improving machine
  translation with human feedback: An exploration of quality estimation as a
  reward model}.
\newblock \emph{arXiv preprint arXiv:2401.12873}.

\bibitem[{Hewitt et~al.(2022)Hewitt, Manning, and
  Liang}]{hewitt-etal-2022-truncation}
John Hewitt, Christopher Manning, and Percy Liang. 2022.
\newblock \href {https://doi.org/10.18653/v1/2022.findings-emnlp.249}
  {Truncation sampling as language model desmoothing}.
\newblock In \emph{Findings of the Association for Computational Linguistics:
  EMNLP 2022}, pages 3414--3427, Abu Dhabi, United Arab Emirates. Association
  for Computational Linguistics.

\bibitem[{Holtzman et~al.(2020)Holtzman, Buys, Du, Forbes, and
  Choi}]{holtzman2020curious}
Ari Holtzman, Jan Buys, Li~Du, Maxwell Forbes, and Yejin Choi. 2020.
\newblock \href {https://openreview.net/forum?id=rygGQyrFvH} {The curious case
  of neural text degeneration}.
\newblock In \emph{International Conference on Learning Representations}.

\bibitem[{Hong et~al.(2024)Hong, Lee, and Thorne}]{hong2024orpo}
Jiwoo Hong, Noah Lee, and James Thorne. 2024.
\newblock \href {https://arxiv.org/abs/2403.07691} {Orpo: Monolithic preference
  optimization without reference model}.
\newblock \emph{arXiv preprint arXiv:2403.07691}.

\bibitem[{Hu et~al.(2017)Hu, Yang, Liang, Salakhutdinov, and
  Xing}]{hu2017toward}
Zhiting Hu, Zichao Yang, Xiaodan Liang, Ruslan Salakhutdinov, and Eric~P. Xing.
  2017.
\newblock \href {https://proceedings.mlr.press/v70/hu17e.html} {Toward
  controlled generation of text}.
\newblock In \emph{Proceedings of the 34th International Conference on Machine
  Learning}, volume~70 of \emph{Proceedings of Machine Learning Research},
  pages 1587--1596. PMLR.

\bibitem[{Ialongo(2016)}]{ialongo2016}
Cristiano Ialongo. 2016.
\newblock \href {https://www.ncbi.nlm.nih.gov/pmc/articles/PMC4910276/}
  {Understanding the effect size and its measures}.
\newblock \emph{Biochem Med (Zagreb)}.

\bibitem[{Jiang et~al.(2023)Jiang, Sablayrolles, Mensch, Bamford, Chaplot,
  de~las Casas, Bressand, Lengyel, Lample, Saulnier, Lavaud, Lachaux, Stock,
  Scao, Lavril, Wang, Lacroix, and Sayed}]{jiang2023mistral}
Albert~Q. Jiang, Alexandre Sablayrolles, Arthur Mensch, Chris Bamford,
  Devendra~Singh Chaplot, Diego de~las Casas, Florian Bressand, Gianna Lengyel,
  Guillaume Lample, Lucile Saulnier, Lélio~Renard Lavaud, Marie-Anne Lachaux,
  Pierre Stock, Teven~Le Scao, Thibaut Lavril, Thomas Wang, Timothée Lacroix,
  and William~El Sayed. 2023.
\newblock \href {https://arxiv.org/abs/2310.06825} {Mistral 7b}.
\newblock \emph{arXiv preprint arXiv:2310.06825}.

\bibitem[{Korbak et~al.(2022)Korbak, Perez, and Buckley}]{korbak-etal-2022-rl}
Tomasz Korbak, Ethan Perez, and Christopher Buckley. 2022.
\newblock \href {https://doi.org/10.18653/v1/2022.findings-emnlp.77} {{RL} with
  {KL} penalties is better viewed as {B}ayesian inference}.
\newblock In \emph{Findings of the Association for Computational Linguistics:
  EMNLP 2022}, pages 1083--1091, Abu Dhabi, United Arab Emirates. Association
  for Computational Linguistics.

\bibitem[{Korbak et~al.(2023)Korbak, Shi, Chen, Bhalerao, Buckley, Phang,
  Bowman, and Perez}]{korbak2023rlhf}
Tomasz Korbak, Kejian Shi, Angelica Chen, Rasika Bhalerao, Christopher~L.
  Buckley, Jason Phang, Samuel~R. Bowman, and Ethan Perez. 2023.
\newblock \href {https://openreview.net/pdf?id=AT8Iw8KOeC} {Pretraining
  language models with human preferences}.
\newblock In \emph{Proceedings of the 40th International Conference on Machine
  Learning}, ICML'23. JMLR.org.

\bibitem[{Krause et~al.(2021)Krause, Gotmare, McCann, Keskar, Joty, Socher, and
  Rajani}]{krause2020gedi}
Ben Krause, Akhilesh~Deepak Gotmare, Bryan McCann, Nitish~Shirish Keskar,
  Shafiq Joty, Richard Socher, and Nazneen~Fatema Rajani. 2021.
\newblock \href {https://doi.org/10.18653/v1/2021.findings-emnlp.424}
  {{G}e{D}i: Generative discriminator guided sequence generation}.
\newblock In \emph{Findings of the Association for Computational Linguistics:
  EMNLP 2021}, pages 4929--4952, Punta Cana, Dominican Republic. Association
  for Computational Linguistics.

\bibitem[{Lee et~al.(2024)Lee, Park, Kim, and Seo}]{lee2024aligning}
Seongyun Lee, Sue~Hyun Park, Seungone Kim, and Minjoon Seo. 2024.
\newblock \href {https://arxiv.org/pdf/2405.17977} {Aligning to thousands of
  preferences via system message generalization}.
\newblock \emph{arXiv preprint arXiv:2405.17977}.

\bibitem[{Leike et~al.(2018)Leike, Krueger, Everitt, Martic, Maini, and
  Legg}]{leike2018scalable}
Jan Leike, David Krueger, Tom Everitt, Miljan Martic, Vishal Maini, and Shane
  Legg. 2018.
\newblock \href {https://arxiv.org/abs/1811.07871} {Scalable agent alignment
  via reward modeling: a research direction}.
\newblock \emph{arXiv preprint arXiv:1811.07871}.

\bibitem[{Levine(2018)}]{levine2018reinforcement}
Sergey Levine. 2018.
\newblock \href {https://arxiv.org/abs/1805.00909} {Reinforcement learning and
  control as probabilistic inference: Tutorial and review}.
\newblock \emph{arXiv preprint arXiv:1805.00909}.

\bibitem[{Lim et~al.(2024)Lim, Vylomova, Cohn, and Kemp}]{lim2024simpsons}
Zheng~Wei Lim, Ekaterina Vylomova, Trevor Cohn, and Charles Kemp. 2024.
\newblock \href {https://arxiv.org/abs/2402.12690} {Simpson's paradox and the
  accuracy-fluency tradeoff in translation}.
\newblock \emph{arXiv preprint arXiv:2402.12690}.

\bibitem[{Little and Badawy(2020)}]{little2020causal}
Max~A. Little and Reham Badawy. 2020.
\newblock \href {https://arxiv.org/pdf/1910.09648} {Causal bootstrapping}.
\newblock \emph{arXiv preprint arXiv:1910.09648}.

\bibitem[{Liu et~al.(2021)Liu, Sap, Lu, Swayamdipta, Bhagavatula, Smith, and
  Choi}]{liu2021dexperts}
Alisa Liu, Maarten Sap, Ximing Lu, Swabha Swayamdipta, Chandra Bhagavatula,
  Noah~A. Smith, and Yejin Choi. 2021.
\newblock \href {https://doi.org/10.18653/v1/2021.acl-long.522} {{DE}xperts:
  Decoding-time controlled text generation with experts and anti-experts}.
\newblock In \emph{Proceedings of the 59th Annual Meeting of the Association
  for Computational Linguistics and the 11th International Joint Conference on
  Natural Language Processing (Volume 1: Long Papers)}, pages 6691--6706,
  Online. Association for Computational Linguistics.

\bibitem[{Meister et~al.(2023{\natexlab{a}})Meister, Pimentel, Malagutti,
  Wilcox, and Cotterell}]{meister-etal-2023-efficacy}
Clara Meister, Tiago Pimentel, Luca Malagutti, Ethan Wilcox, and Ryan
  Cotterell. 2023{\natexlab{a}}.
\newblock \href {https://doi.org/10.18653/v1/2023.acl-long.80} {On the efficacy
  of sampling adapters}.
\newblock In \emph{Proceedings of the 61st Annual Meeting of the Association
  for Computational Linguistics (Volume 1: Long Papers)}, pages 1437--1455,
  Toronto, Canada. Association for Computational Linguistics.

\bibitem[{Meister et~al.(2023{\natexlab{b}})Meister, Pimentel, Wiher, and
  Cotterell}]{meister-etal-2023-locally}
Clara Meister, Tiago Pimentel, Gian Wiher, and Ryan Cotterell.
  2023{\natexlab{b}}.
\newblock \href {https://doi.org/10.1162/tacl_a_00536} {Locally typical
  sampling}.
\newblock \emph{Transactions of the Association for Computational Linguistics},
  11:102--121.

\bibitem[{Meister et~al.(2022)Meister, Wiher, Pimentel, and
  Cotterell}]{meister-etal-2022-high}
Clara Meister, Gian Wiher, Tiago Pimentel, and Ryan Cotterell. 2022.
\newblock \href {https://doi.org/10.18653/v1/2022.acl-short.5} {On the
  probability{--}quality paradox in language generation}.
\newblock In \emph{Proceedings of the 60th Annual Meeting of the Association
  for Computational Linguistics (Volume 2: Short Papers)}, pages 36--45,
  Dublin, Ireland. Association for Computational Linguistics.

\bibitem[{{Metropolis} et~al.(1953){Metropolis}, {Rosenbluth}, {Rosenbluth},
  {Teller}, and {Teller}}]{metropolis1953}
Nicholas {Metropolis}, Arianna~W. {Rosenbluth}, Marshall~N. {Rosenbluth},
  Augusta~H. {Teller}, and Edward {Teller}. 1953.
\newblock \href {https://doi.org/10.1063/1.1699114} {Equation of state
  calculations by fast computing machines}.
\newblock \emph{The Journal of Chemical Physics}, 21(6):1087--1092.

\bibitem[{Ouyang et~al.(2022{\natexlab{a}})Ouyang, Wu, Jiang, Almeida,
  Wainwright, Mishkin, Zhang, Agarwal, Slama, Ray, Schulman, Hilton, Kelton,
  Miller, Simens, Askell, Welinder, Christiano, Leike, and
  Lowe}]{ouyang2022instruct}
Long Ouyang, Jeffrey Wu, Xu~Jiang, Diogo Almeida, Carroll Wainwright, Pamela
  Mishkin, Chong Zhang, Sandhini Agarwal, Katarina Slama, Alex Ray, John
  Schulman, Jacob Hilton, Fraser Kelton, Luke Miller, Maddie Simens, Amanda
  Askell, Peter Welinder, Paul~F Christiano, Jan Leike, and Ryan Lowe.
  2022{\natexlab{a}}.
\newblock \href
  {https://proceedings.neurips.cc/paper_files/paper/2022/file/b1efde53be364a73914f58805a001731-Paper-Conference.pdf}
  {Training language models to follow instructions with human feedback}.
\newblock In \emph{Advances in Neural Information Processing Systems},
  volume~35, pages 27730--27744. Curran Associates, Inc.

\bibitem[{Ouyang et~al.(2022{\natexlab{b}})Ouyang, Wu, Jiang, Almeida,
  Wainwright, Mishkin, Zhang, Agarwal, Slama, Ray, Schulman, Hilton, Kelton,
  Miller, Simens, Askell, Welinder, Christiano, Leike, and
  Lowe}]{ouyang2022rlhf}
Long Ouyang, Jeffrey Wu, Xu~Jiang, Diogo Almeida, Carroll Wainwright, Pamela
  Mishkin, Chong Zhang, Sandhini Agarwal, Katarina Slama, Alex Ray, John
  Schulman, Jacob Hilton, Fraser Kelton, Luke Miller, Maddie Simens, Amanda
  Askell, Peter Welinder, Paul~F Christiano, Jan Leike, and Ryan Lowe.
  2022{\natexlab{b}}.
\newblock \href
  {https://proceedings.neurips.cc/paper_files/paper/2022/file/b1efde53be364a73914f58805a001731-Paper-Conference.pdf}
  {Training language models to follow instructions with human feedback}.
\newblock In \emph{Advances in Neural Information Processing Systems},
  volume~35, pages 27730--27744. Curran Associates, Inc.

\bibitem[{Perez et~al.(2022)Perez, Huang, Song, Cai, Ring, Aslanides, Glaese,
  McAleese, and Irving}]{perez-etal-2022-red}
Ethan Perez, Saffron Huang, Francis Song, Trevor Cai, Roman Ring, John
  Aslanides, Amelia Glaese, Nat McAleese, and Geoffrey Irving. 2022.
\newblock \href {https://doi.org/10.18653/v1/2022.emnlp-main.225} {Red teaming
  language models with language models}.
\newblock In \emph{Proceedings of the 2022 Conference on Empirical Methods in
  Natural Language Processing}, pages 3419--3448, Abu Dhabi, United Arab
  Emirates. Association for Computational Linguistics.

\bibitem[{Pillutla et~al.(2023)Pillutla, Liu, Thickstun, Welleck, Swayamdipta,
  Zellers, Oh, Choi, and Harchaoui}]{mauveJMLR}
Krishna Pillutla, Lang Liu, John Thickstun, Sean Welleck, Swabha Swayamdipta,
  Rowan Zellers, Sewoong Oh, Yejin Choi, and Zaid Harchaoui. 2023.
\newblock \href {http://jmlr.org/papers/v24/23-0023.html} {{MAUVE} scores for
  generative models: Theory and practice}.
\newblock \emph{Journal of Machine Learning Research}, 24(356):1--92.

\bibitem[{Rafailov et~al.(2023)Rafailov, Sharma, Mitchell, Manning, Ermon, and
  Finn}]{rafailov2023direct}
Rafael Rafailov, Archit Sharma, Eric Mitchell, Christopher~D Manning, Stefano
  Ermon, and Chelsea Finn. 2023.
\newblock \href {https://openreview.net/forum?id=HPuSIXJaa9} {Direct preference
  optimization: Your language model is secretly a reward model}.
\newblock In \emph{Thirty-seventh Conference on Neural Information Processing
  Systems}.

\bibitem[{Rando and Tram{\`e}r(2024)}]{rando2024universal}
Javier Rando and Florian Tram{\`e}r. 2024.
\newblock \href {https://openreview.net/forum?id=GxCGsxiAaK} {Universal
  jailbreak backdoors from poisoned human feedback}.
\newblock In \emph{The Twelfth International Conference on Learning
  Representations}.

\bibitem[{Stahlberg and Byrne(2019)}]{stahlberg-byrne-2019-nmt}
Felix Stahlberg and Bill Byrne. 2019.
\newblock \href {https://doi.org/10.18653/v1/D19-1331} {On {NMT} search errors
  and model errors: Cat got your tongue?}
\newblock In \emph{Proceedings of the 2019 Conference on Empirical Methods in
  Natural Language Processing and the 9th International Joint Conference on
  Natural Language Processing (EMNLP-IJCNLP)}, pages 3356--3362, Hong Kong,
  China. Association for Computational Linguistics.

\bibitem[{Stiennon et~al.(2020)Stiennon, Ouyang, Wu, Ziegler, Lowe, Voss,
  Radford, Amodei, and Christiano}]{stiennon2020rlhf}
Nisan Stiennon, Long Ouyang, Jeffrey Wu, Daniel Ziegler, Ryan Lowe, Chelsea
  Voss, Alec Radford, Dario Amodei, and Paul~F. Christiano. 2020.
\newblock \href
  {https://proceedings.neurips.cc/paper_files/paper/2020/file/1f89885d556929e98d3ef9b86448f951-Paper.pdf}
  {Learning to summarize with human feedback}.
\newblock In \emph{Advances in Neural Information Processing Systems},
  volume~33, pages 3008--3021. Curran Associates, Inc.

\bibitem[{Su et~al.(2022)Su, Lan, Wang, Yogatama, Kong, and Collier}]{su2022a}
Yixuan Su, Tian Lan, Yan Wang, Dani Yogatama, Lingpeng Kong, and Nigel Collier.
  2022.
\newblock \href {https://openreview.net/forum?id=V88BafmH9Pj} {A contrastive
  framework for neural text generation}.
\newblock In \emph{Advances in Neural Information Processing Systems}.

\bibitem[{Sulem et~al.(2020)Sulem, Abend, and
  Rappoport}]{sulem-etal-2020-semantic}
Elior Sulem, Omri Abend, and Ari Rappoport. 2020.
\newblock \href {https://aclanthology.org/2020.starsem-1.6} {Semantic
  structural decomposition for neural machine translation}.
\newblock In \emph{Proceedings of the Ninth Joint Conference on Lexical and
  Computational Semantics}, pages 50--57, Barcelona, Spain (Online).
  Association for Computational Linguistics.

\bibitem[{Teich et~al.(2020)Teich, Mart{\'\i}nez, and
  Karakanta}]{teich2020handbook}
Elke Teich, Jos{\'e}~Mart{\'\i}nez Mart{\'\i}nez, and Alina Karakanta. 2020.
\newblock \href {https://doi.org/10.4324/9781315178127} {\emph{Translation,
  Information Theory and Cognition}}.
\newblock Routledge.

\bibitem[{Touvron et~al.(2023)Touvron, Martin, Stone, Albert, Almahairi,
  Babaei, Bashlykov, Batra, Bhargava, Bhosale, Bikel, Blecher, Ferrer, Chen,
  Cucurull, Esiobu, Fernandes, Fu, Fu, Fuller, Gao, Goswami, Goyal, Hartshorn,
  Hosseini, Hou, Inan, Kardas, Kerkez, Khabsa, Kloumann, Korenev, Koura,
  Lachaux, Lavril, Lee, Liskovich, Lu, Mao, Martinet, Mihaylov, Mishra,
  Molybog, Nie, Poulton, Reizenstein, Rungta, Saladi, Schelten, Silva, Smith,
  Subramanian, Tan, Tang, Taylor, Williams, Kuan, Xu, Yan, Zarov, Zhang, Fan,
  Kambadur, Narang, Rodriguez, Stojnic, Edunov, and Scialom}]{touvron2023llama}
Hugo Touvron, Louis Martin, Kevin Stone, Peter Albert, Amjad Almahairi, Yasmine
  Babaei, Nikolay Bashlykov, Soumya Batra, Prajjwal Bhargava, Shruti Bhosale,
  Dan Bikel, Lukas Blecher, Cristian~Canton Ferrer, Moya Chen, Guillem
  Cucurull, David Esiobu, Jude Fernandes, Jeremy Fu, Wenyin Fu, Brian Fuller,
  Cynthia Gao, Vedanuj Goswami, Naman Goyal, Anthony Hartshorn, Saghar
  Hosseini, Rui Hou, Hakan Inan, Marcin Kardas, Viktor Kerkez, Madian Khabsa,
  Isabel Kloumann, Artem Korenev, Punit~Singh Koura, Marie-Anne Lachaux,
  Thibaut Lavril, Jenya Lee, Diana Liskovich, Yinghai Lu, Yuning Mao, Xavier
  Martinet, Todor Mihaylov, Pushkar Mishra, Igor Molybog, Yixin Nie, Andrew
  Poulton, Jeremy Reizenstein, Rashi Rungta, Kalyan Saladi, Alan Schelten, Ruan
  Silva, Eric~Michael Smith, Ranjan Subramanian, Xiaoqing~Ellen Tan, Binh Tang,
  Ross Taylor, Adina Williams, Jian~Xiang Kuan, Puxin Xu, Zheng Yan, Iliyan
  Zarov, Yuchen Zhang, Angela Fan, Melanie Kambadur, Sharan Narang, Aurelien
  Rodriguez, Robert Stojnic, Sergey Edunov, and Thomas Scialom. 2023.
\newblock \href {https://arxiv.org/abs/2307.09288} {Llama 2: Open foundation
  and fine-tuned chat models}.
\newblock \emph{arXiv preprint arXiv:2307.09288}.

\bibitem[{Vaswani et~al.(2017)Vaswani, Shazeer, Parmar, Uszkoreit, Jones,
  Gomez, Kaiser, and Polosukhin}]{vaswani2017attention}
Ashish Vaswani, Noam Shazeer, Niki Parmar, Jakob Uszkoreit, Llion Jones,
  Aidan~N. Gomez, {\L}ukasz Kaiser, and Illia Polosukhin. 2017.
\newblock \href
  {https://proceedings.neurips.cc/paper_files/paper/2017/file/3f5ee243547dee91fbd053c1c4a845aa-Paper.pdf}
  {Attention is all you need}.
\newblock In \emph{Advances in Neural Information Processing Systems},
  volume~30. Curran Associates, Inc.

\bibitem[{Wang et~al.(2024)Wang, Zheng, Chen, Liu, Dou, Huang, Shen, Jin, Zhou,
  Shi, Gao, Xu, Zhou, Fan, Xi, Zhao, Wang, Ji, Yan, Shen, Chen, Gui, Zhang,
  Qiu, Huang, Wu, and Jiang}]{wang2024secrets}
Binghai Wang, Rui Zheng, Lu~Chen, Yan Liu, Shihan Dou, Caishuang Huang, Wei
  Shen, Senjie Jin, Enyu Zhou, Chenyu Shi, Songyang Gao, Nuo Xu, Yuhao Zhou,
  Xiaoran Fan, Zhiheng Xi, Jun Zhao, Xiao Wang, Tao Ji, Hang Yan, Lixing Shen,
  Zhan Chen, Tao Gui, Qi~Zhang, Xipeng Qiu, Xuanjing Huang, Zuxuan Wu, and
  Yu-Gang Jiang. 2024.
\newblock \href {https://arxiv.org/abs/2401.06080} {Secrets of rlhf in large
  language models part ii: Reward modeling}.
\newblock \emph{arXiv preprint arXiv:2401.06080}.

\bibitem[{Wang(2022)}]{wang2022metropolis}
Guanyang Wang. 2022.
\newblock \href {https://doi.org/10.3150/21-BEJ1409} {{Exact convergence
  analysis of the independent Metropolis-Hastings algorithms}}.
\newblock \emph{Bernoulli}, 28(3):2012 -- 2033.

\bibitem[{Welleck et~al.(2020)Welleck, Kulikov, Kim, Pang, and
  Cho}]{welleck-etal-2020-consistency}
Sean Welleck, Ilia Kulikov, Jaedeok Kim, Richard~Yuanzhe Pang, and Kyunghyun
  Cho. 2020.
\newblock \href {https://doi.org/10.18653/v1/2020.emnlp-main.448} {Consistency
  of a recurrent language model with respect to incomplete decoding}.
\newblock In \emph{Proceedings of the 2020 Conference on Empirical Methods in
  Natural Language Processing (EMNLP)}, pages 5553--5568, Online. Association
  for Computational Linguistics.

\bibitem[{Wiher et~al.(2022)Wiher, Meister, and
  Cotterell}]{wiher-etal-2022-decoding}
Gian Wiher, Clara Meister, and Ryan Cotterell. 2022.
\newblock \href {https://doi.org/10.1162/tacl_a_00502} {On decoding strategies
  for neural text generators}.
\newblock \emph{Transactions of the Association for Computational Linguistics},
  10:997--1012.

\bibitem[{Yang and Klein(2021)}]{fudge2021}
Kevin Yang and Dan Klein. 2021.
\newblock \href {https://doi.org/10.18653/v1/2021.naacl-main.276} {{FUDGE}:
  {C}ontrolled text generation with future discriminators}.
\newblock In \emph{Proceedings of the 2021 Conference of the North American
  Chapter of the Association for Computational Linguistics: Human Language
  Technologies}. Association for Computational Linguistics.

\bibitem[{Yang et~al.(2018)Yang, Huang, and Ma}]{yang-etal-2018-breaking}
Yilin Yang, Liang Huang, and Mingbo Ma. 2018.
\newblock \href {https://doi.org/10.18653/v1/D18-1342} {Breaking the beam
  search curse: A study of (re-)scoring methods and stopping criteria for
  neural machine translation}.
\newblock In \emph{Proceedings of the 2018 Conference on Empirical Methods in
  Natural Language Processing}, pages 3054--3059, Brussels, Belgium.
  Association for Computational Linguistics.

\bibitem[{Zhang et~al.(2023)Zhang, Song, Li, Zhou, and Song}]{zhang2023survey}
Hanqing Zhang, Haolin Song, Shaoyu Li, Ming Zhou, and Dawei Song. 2023.
\newblock \href {https://dl.acm.org/doi/10.1145/3617680} {A survey of
  controllable text generation using transformer-based pre-trained language
  models}.
\newblock \emph{ACM Computing Surveys}, 56(3):1--37.

\bibitem[{Zhang et~al.(2021)Zhang, Duckworth, Ippolito, and
  Neelakantan}]{zhang-etal-2021-trading}
Hugh Zhang, Daniel Duckworth, Daphne Ippolito, and Arvind Neelakantan. 2021.
\newblock \href {https://aclanthology.org/2021.humeval-1.3} {Trading off
  diversity and quality in natural language generation}.
\newblock In \emph{Proceedings of the Workshop on Human Evaluation of NLP
  Systems (HumEval)}, pages 25--33, Online. Association for Computational
  Linguistics.

\bibitem[{Ziegler et~al.(2020)Ziegler, Stiennon, Wu, Brown, Radford, Amodei,
  Christiano, and Irving}]{ziegler2020finetuning}
Daniel~M. Ziegler, Nisan Stiennon, Jeffrey Wu, Tom~B. Brown, Alec Radford,
  Dario Amodei, Paul Christiano, and Geoffrey Irving. 2020.
\newblock \href {https://arxiv.org/abs/1909.08593} {Fine-tuning language models
  from human preferences}.
\newblock \emph{arXiv preprint arXiv:1909.08593}.

\end{thebibliography}

\appendix

\onecolumn

\section{Locally Normalized Sampling Adaptors}\label{app:top-k-example}
Despite their empirical success, the normalization performed in locally normalized sampling adaptors can lead to strings that are scored higher by the transform function $\globalsamplingadaptor$ to have lower probability under the induced language model $\plmlocalsampling$. 
This behaviour stems from the fact that the normalization is dependent on the weights assigned to \emph{other} symbols at a given time step, and thus leads to inconsistencies at the global level. 
This is arguably undesirable since it makes it difficult to formally reason about how the transform function---which embodies the core logic of the sampling adaptor---influences the properties of strings sampled from $\plmlocalsampling$. 
To make this clear, we provide an example of this behaviour in top-$\topk$ sampling \citep{fan-etal-2018-hierarchical}.\looseness=-1
\begin{example}
Consider an alphabet $\eosalphabet = \setof{a,b,c,\eos}$. 
Let us define a language model\footnote{Note that this language model is not necessarily tight.} such that 
\begin{equation}
    \plm(\str) = \plm(\eos \mid \str) \prodstring \plm(\tokent \mid \strltt)
\end{equation}
and 
\begin{equation}
    \plm(\cdot \mid \strltt) =
    \begin{cases}
        [0.4, 0.1, 0.1, 0.4] & \text{if $\token_{t-1} = a$} \\
        [0.1, 0.4, 0.2, 0.3] & \text{if $\token_{t-1} = b$} \\
        [0.5, 0.5, 0.0, 0.0] & \text{if $t = 1$} \\
    \end{cases}
\end{equation}
where the vector $[\dots]$ denotes the probability assigned to $a,b,c,\eos$, in that order.
Let us now consider the probability of the strings $aa$ and $bb$. 
\begin{align}\label{eq:pre-topk-probs}
    \plm(aaa) = 0.5 \times 0.4 \times 0.4 \times 0.4 = 0.032 \\
    \plm(bbb) = 0.5 \times 0.4 \times 0.4 \times 0.3 = 0.024 
\end{align}
Let us now consider their probability when top-$2$ sampling is applied. 
\begin{align}\label{eq:post-topk-probs}
    \plmlocalsampling(aaa) = 0.5 \times 0.5 \times 0.5 \times 0.5 = 0.0625\\
    \plmlocalsampling(bbb) = 0.5 \times \frac{0.4}{0.7} \times \frac{0.4}{0.7} \times \frac{0.3}{0.7} = 0.06997
\end{align}
Because the transform function in top-$\topk$ does not modify the symbol probability if the symbol is kept (see \Cref{app:sampling-adaptor-examples}), \Cref{eq:pre-topk-probs} is precisely the score assigned to the strings by the transform function, and \Cref{eq:post-topk-probs} is the score after the normalization step. 
And we observe a reversal---the string ``$aa$'' has a higher score than ``$bb$'', but is later assigned a lower probability under the induced model $\plmlocalsampling$.
\end{example}

\section{Supplementary proofs for \Cref{sec:bayesian}}
\subsection{Proof of \Cref{eq:chebyschev}}
\label{app:proof-chebyshev}
\begin{proof}
\begin{subequations}
    \begin{align}
        \prob\mleft( \strset \notin  \typicalset(\qalign)\mright) &= \prob( \Big|  \ent(\rvY, \rvA = \good) + \frac{\log \qalign(\strset)}{N} \Big| \ge \varepsilon )  \\
        &= \prob( \Big|  \ent(\rvY, \rvA = \good) - \frac{- \log \qalign(\strset)}{N} \Big| \ge \varepsilon) \\
        &\le \frac{\var( \frac{- \log \qalign(\strset)}{N})}{\varepsilon^2}
        = \frac{\var(-\log \qalign(\str))}{N \varepsilon^2} = \frac{\var(\rvI)}{N \varepsilon^2} \label{eq:apply-chebyshev}
    \end{align} %
\end{subequations}
\Cref{eq:apply-chebyshev} holds due to Chebyshev's inequality.
\end{proof}

\subsection{Proof of the Probability--Quality trade-off}
\label{app:proof-trade-off}
\begin{proof}
    Consider the $(N, \varepsilon)$-typical set 
    \begin{align} 
        \typicalset(\qalign) \defeq \Big\{ \strset \in (\kleene{\alphabet})^N   
    \mid \Big|  \ent(\rvY, \rvA = \good) + \frac{\log \qalign(\strset)}{N} \Big| < \varepsilon \Big\} 
    \end{align}
    By rewriting \Cref{eq:secret}
    \begin{align*}
        \frac{\reward(\str)}{\beta} = \log \frac{\qalign(\str)}{\plm(\str)} + \log \normalizingconstantrlhf 
    \end{align*}
    as
    \begin{align*}
        \log \qalign(\str) = \frac{\reward(\str)}{\beta} + \log \plm(\str) - \log \normalizingconstantrlhf,
    \end{align*}
     and summing over all $\str \in \strset$, we get
    \begin{align} \label{eq:rewrite-secret}
        \log \qalign(\strset) = \frac{\reward(\strset)}{\beta} + \log \plm(\strset) - N \log \normalizingconstantrlhf,
    \end{align}
    
    Substituting \Cref{eq:rewrite-secret} into \Cref{eq:proof-trade-off-typical-set}, we obtain
    \begin{align}
        \typicalset(\qalign) &= \Big\{ \strset \in (\kleene{\alphabet})^N   
    \mid \Big|  \ent(\rvY, \rvA = \good) + \frac{\reward(\strset)}{N\beta} + \frac{\log \plm(\strset)}{N} - \log \normalizingconstantrlhf \Big| < \varepsilon \Big\} \nonumber \\
    &= \Big\{ \strset \in (\kleene{\alphabet})^N   
    \mid \Big| C + \frac{\log \plm(\strset)}{N}  + \frac{\reward(\strset)}{N\beta} \Big| < \varepsilon \Big\}  \nonumber 
    \end{align}
    Due to Chebyshev's inequality, $\prob\mleft( \strset \notin T_N^{\varepsilon}(\qalign)\mright) \leq \frac{\var(\rvI)}{N \varepsilon^2}$, we have $\Big| C + \frac{\log \plm(\strset)}{N}  + \frac{\reward(\strset)}{N\beta} \Big| < \varepsilon$ with probability at least $\mleft( 1 - \frac{\var(\rvI)}{N \varepsilon^2} \mright)$ for all $N$ and $ \varepsilon > 0$.
    When $N \ge \frac{\var(\rvI)}{ \delta \varepsilon^2}$, the above holds  with probability at least $1 - \delta$.\looseness=-1
\end{proof}

\section{Infinite-Entropy Language Models}\label{sec:inf-entropy-lms}
A key assumption we have made in this paper is that all language models $\plm$ under consideration have \emph{finite} entropy.
In general, this is not true.
To make this point clear, we give an example of a simple language model whose entropy diverges.
\begin{example}[A Tight LM with Infinite Entropy]
    Let $\alphabet \defeq \{a\}$ and define for $t = 1, 2, \ldots$
    \begin{equation} \label{eq:infinite-entropy-lm}
        \plm(\underbrace{a \cdots a}_{\textnormal{$t$ times}}) \defeq \frac{1}{\lg\left(t + 1\right)} - \frac{1}{\lg\left(t + 2\right)}.
    \end{equation}
\end{example}
\begin{proposition} \label{prop:infinite-entropy}
   The language model $\plm$ from \Cref{eq:infinite-entropy-lm} is tight and has infinite entropy.
\end{proposition}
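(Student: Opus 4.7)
The plan is to split the claim into \emph{tightness} and \emph{infinite entropy}, and handle each by a direct computation. Tightness is immediate from the telescoping structure of the definition: since $\alphabet = \{a\}$, the only strings are $a^t$ for $t \geq 1$, so
\[
\sum_{t=1}^\infty \plm(a^t) = \sum_{t=1}^\infty \left(\frac{1}{\lg(t+1)} - \frac{1}{\lg(t+2)}\right) = \frac{1}{\lg 2} - \lim_{t \to \infty} \frac{1}{\lg(t+2)} = 1,
\]
using $\lg 2 = 1$.

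For infinite entropy, the plan is to lower-bound $\plm(a^t)$ tightly enough to force a divergent comparison sum. Rewriting the definition gives
\[
\plm(a^t) = \frac{\lg\!\bigl(1 + \tfrac{1}{t+1}\bigr)}{\lg(t+1)\,\lg(t+2)},
\]
and the elementary bound $\lg(1 + 1/x) \geq 1/((x+1)\ln 2)$, which follows from $\int_x^{x+1} du/u \geq 1/(x+1)$, yields $\plm(a^t) \geq c/(t \log^2 t)$ for some constant $c > 0$ and all sufficiently large $t$. Since $f(x) = -x \log x$ is strictly increasing on $(0, 1/e)$ and $\plm(a^t)$ lies in this interval for large $t$, monotonicity of $f$ propagates the lower bound to
\[
-\plm(a^t) \log \plm(a^t) \;\geq\; \frac{c \bigl(\log t + 2 \log \log t - \log c\bigr)}{t \log^2 t} \;\geq\; \frac{c'}{t \log t}
\]
for some $c' > 0$ and all $t$ large enough. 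Since $\sum_{t \geq 2} 1/(t \log t)$ diverges by the integral test (antiderivative $\log \log t$), the entropy sum $\ent(\plm) = -\sum_{t=1}^\infty \plm(a^t) \log \plm(a^t)$ diverges as well.

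The main obstacle is bookkeeping in the asymptotic analysis: the lower bound $\plm(a^t) \gtrsim 1/(t \log^2 t)$ is just tight enough that after taking $-\log$ and multiplying back by $\plm(a^t)$, the resulting contributions decay as $1/(t \log t)$ rather than as a summable $1/(t \log^{1+\varepsilon} t)$. The use of monotonicity of $-x \log x$ near zero is what makes this chain of estimates rigorous; everything else is routine.
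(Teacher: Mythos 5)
Your proof is correct and follows essentially the same route as the paper's: both establish tightness by telescoping, lower-bound $\plm(a^t)$ by a quantity of order $1/(t\log^2 t)$, invoke monotonicity of $-x\log x$ near zero to transfer that bound to the entropy summands, and conclude via divergence of $\sum_t 1/(t\log t)$. The only (cosmetic) difference is that you obtain the key estimate from the identity $\plm(a^t)=\lg\bigl(1+\tfrac{1}{t+1}\bigr)/\bigl(\lg(t+1)\lg(t+2)\bigr)$ and $\ln(1+1/x)\geq 1/(x+1)$, whereas the paper derives it from the monotonicity and convexity of $1/\lg x$ via its derivative; your statement of the monotonicity range of $-x\log x$ as $(0,1/e)$ is in fact slightly more careful than the paper's.
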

\begin{proof}
    The proof follows \citet{baer-infinite-entropy}.
    We consider the language model:
    \begin{equation}
        \plm(\underbrace{a \cdots a}_{\textnormal{$t$ times}}) \defeq \frac{1}{\lg\left(t + 1\right)} - \frac{1}{\lg\left(t + 2\right)}
    \end{equation}    
    $\plm(\underbrace{a \cdots a}_{\textnormal{$t$ times}})$ is positive over $t = 1, 2, \ldots$ and sums to $1$ since it forms a telescoping sum with the only remaining term $\frac{1}{\lg\left(2\right)} = 1$.
    This proves that $\plm$ is tight.
    
    Furthermore, we can show that $\plm$'s entropy is $\infty$.  
    Let us denote $\plm_t \defeq \plm(\underbrace{a \cdots a}_{\textnormal{$t$ times}})$, and begin by pointing out several facts. 
    First, the monotonicity and convexity of $\frac{1}{\lg x}$ is easily seen by noting that its first derivative is $-\frac{1}{x \lg^2 x \log 2}$ (negative for $x > 1$) and its second derivative is $\frac{\lg(x + \frac{2}{\log 2})}{x^2 \lg^3x \log(2)}$ (positive for $x > 1$). 
    This will allow us to bound $\plm_t$ from below with $\plm_t' \defeq \frac{1}{t \lg^2 t \log 2}$, which is monotonically decreasing and less than $\frac{1}{2}$ for $t \ge 3$.
    Then, we point out that with basic calculus we can see that $\plm \lg \plm$ is monotonically decreasing with $\plm$ for $\plm < \frac{1}{2}$.
    With these, we can say that $\plm_t' \lg \plm_t'$ is monotonically decreasing for $t \ge 2$ since $\plm_t' < \frac{1}{2}$ for these $t$.
    We are now ready to lower bound $\ent(\plm)$ with an expression equal to infinity, thereby showing that $\ent(\plm)$ is infinite:
    \begin{subequations}
        \begin{align}
            \ent\left(\plm\right) 
            &= -\sum_{t = 1}^{\infty} \plm_t \lg \plm_t \\
            &= -\plm_1 \lg \plm_1 - \plm_2 \lg \plm_2 - \sum_{t = 3}^{\infty} \plm_t \lg \plm_t \\
            &= -\plm_1 \lg \plm_1 - \plm_2 \lg \plm_2 - \sum_{t = 3}^{\infty} \mleft(\frac{1}{\lg\left(t + 1\right)} - \frac{1}{\lg\left(t + 2\right)}\mright) \lg \mleft(\frac{1}{\lg\left(t + 1\right)} - \frac{1}{\lg\left(t + 2\right)}\mright) \\
            &> \frac{1}{\log 2} \sum^\infty_{t=3}\frac{1}{t \lg^2 t}(\lg t + \lg \lg^2 t + \lg\log 2) \\
            &> \frac{1}{\log 2} \sum^\infty_{t=3}\frac{1}{t \lg^2 t} \\
            &> \frac{1}{\log 2} \int^\infty_3 \frac{1}{t \lg t} dt \\
            &> \lim_{n\rightarrow\infty} (\lg2) (\lg \lg n - \lg \lg 3) = \infty
        \end{align}
    \end{subequations}
\end{proof}

\section{A Tighter (Chernoff) Bound}\label{sec:chernoff}
In this section, we give a tighter concentration inequality than the (standard) one derived with Chebyshev's inequality. 
The inequality displayed in \Cref{eq:chebyschev} is weak in the sense that the average right hand size is $\bigo(\frac{1}{N})$---ideally, we desire a concentration inequality that is exponential, i.e., $\bigo(\exp(-cN))$ for some constant $c \in \Rpos$. 
To prove such a tighter concentration inequality, we define a class of language models that we term \defn{sub-exponential} language models.
We show that both classical $n$-gram language models as well as modern Transformer-based language models are sub-exponential under our definition.
We further show that we can apply the Chernoff--Cram{\'e}r method to argue that the sample entropy collapses around the mean exponentially quickly.

Before we delve into our derivation, we highlight what makes a direct application of a standard concentration bound, e.g., a Hoeffding bound, tricky.
Consider a language model $\plm$ with support everywhere on $\kleene{\alphabet}$.
Furthermore, consider an enumeration $\setof{\str_n}_{n=1}^\infty$ of $\kleene{\alphabet}$ such that $n > m$ implies $\plm(\str_n) \leq \plm(\str_m)$.
Observing the infinite sum $\sum_{n=1}^\infty \plm(\str_n) = 1$ is convergent, we must have that $\plm(\str_n) \rightarrow 0$ as $n \rightarrow \infty$. 
It follows by the continuity of $\log$, that $-\log \plm(\str_n) \rightarrow \infty$ as $n \rightarrow \infty$. 
A simpler way of stating the above is that
the random variable $\rvI(\str) = - \log \plm(\str)$, distributed according to\looseness=-1
\begin{equation}
    \prob(\rvI = \information) = \sumoverstrings \plm(\str) \indicatorf \{\information = - \log \plm(\str)\},
\end{equation}
is unbounded.

\subsection{Prerequisites}\label{sec:renyi}
We will now introduce several definitions and prove several results.

\begin{definition}[Non-trivial Language Model]
We call a language model over $\alphabet$ \defn{non-trivial} if its support is an infinite subset of $\kleene{\alphabet}$.
\end{definition}

\begin{definition}[\Renyi Entropy]
Let $\plm$ be a language model over $\alphabet$.
The \defn{\Renyi entropy} of $\plm$ is defined as
\begin{equation}
\begin{aligned}
    \renyient(\plm) = \begin{cases}
        \frac{1}{1-\renyialpha} \log \sumoverstrings \plm(\str)^\renyialpha  &  \renyialpha \in (0, 1) \\
     - \sumoverstrings \plm(\str) \log  \plm(\str)  & \renyialpha = 1
    \end{cases}
\end{aligned}
\end{equation}
for $\renyialpha \in (0, 1]$
\end{definition}

\begin{definition}
A language model $\plm$ is $\eos$-bounded if there exists $c$ such that $\plm(\eos \mid \str) > c > 0$ for all $\str \in \kleene{\alphabet}$.\footnote{
Strictly speaking, autoregressive language models are not always language models, i.e., valid probability distributions over $\kleene{\alphabet}$ where $\sumoverstrings \plm(\str) = 1$. We highlight this difference in our exposition on locally vs. globally normalized sampling adaptors in \Cref{sec:sampling-adaptors}\looseness=-1.}
In other words, $\plm(\str \oplus \eos) \le (1-c)^{|\str|}$ for all $\str \in \kleene{\alphabet}$.
\end{definition}

\begin{restatable}{proposition}{boundedrenyi}\label{prop:boundedrenyi}
Let $\plm$ be an $\eos$-bounded language model. 
Then, $\renyient(\plm) < +\infty$ for $\renyialpha \in (0, 1]$.
\end{restatable}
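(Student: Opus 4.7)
The strategy is to exploit the $\eos$-bounded condition, which forces the tail probability of long strings to decay geometrically in length, to establish that $\sumoverstrings \plm(\str)^\renyialpha$ is finite for $\renyialpha \in (0, 1)$ and that $\ent(\plm)$ is finite for $\renyialpha = 1$. Throughout I would take $\alphabet$ to be finite, as is standard. In both cases, the proof stratifies the sum over $\kleene{\alphabet}$ by string length and reduces the expression to a geometric series.

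\textbf{The case $\renyialpha = 1$.} I would apply the chain rule for Shannon entropy to decompose $\ent(\plm)$ into a sum over time steps of per-step conditional entropies, each weighted by the probability that generation has not yet terminated before that step. Each per-step conditional entropy is bounded above by $\log|\eosalphabet|$, and by the $\eos$-bounded hypothesis the probability of not having terminated by step $t$ is at most $(1-c)^{t-1}$. The resulting series is geometric and sums to at most $\log(|\eosalphabet|)/c$.

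\textbf{The case $\renyialpha \in (0, 1)$.} I would partition $\sumoverstrings \plm(\str)^\renyialpha$ by length and apply Jensen's inequality to each length-$n$ subsum, using concavity of $x \mapsto x^\renyialpha$:
\begin{equation*}
\sum_{\str : |\str| = n} \plm(\str)^\renyialpha \le |\alphabet|^{n(1-\renyialpha)} \mleft(\sum_{\str : |\str| = n} \plm(\str) \mright)^{\!\renyialpha} \le |\alphabet|^{n(1-\renyialpha)}(1-c)^{n\renyialpha}.
\end{equation*}
The second inequality follows by repeated conditioning from the $\eos$-bounded hypothesis: the total probability that a generated string has length at least $n$ is bounded by $(1-c)^n$. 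Summing over $n$ yields a geometric series, and taking its logarithm scaled by $1/(1-\renyialpha)$ gives the desired bound on $\renyient(\plm)$.

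\textbf{Main obstacle.} The delicate point is securing convergence of $\sum_n \mleft(|\alphabet|^{1-\renyialpha}(1-c)^\renyialpha\mright)^n$ across the entire range $\renyialpha \in (0, 1]$. Convergence requires $|\alphabet|^{1-\renyialpha}(1-c)^\renyialpha < 1$, which holds comfortably as $\renyialpha \to 1$ (where the expression tends to $1 - c$) but degenerates as $\renyialpha \to 0^+$ since $|\alphabet|^{1-\renyialpha} \to |\alphabet|$ while $(1-c)^\renyialpha \to 1$. Handling the small-$\renyialpha$ regime is where I would expect to do the real work; plausible refinements include sharpening the trivial support bound $|\alphabet|^n$ to reflect the effective support under $\plm$, or recursively peeling off autoregressive factors and exploiting the per-step mass bound $1-c$ inside each factor rather than applying Jensen monolithically.
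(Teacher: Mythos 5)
Your case $\renyialpha=1$ is correct and complete, and it takes a genuinely different route from the paper: the paper bounds the Shannon entropy by $\ent_{1/2}(\plm)$ via the GM--AM inequality and then falls back on the $\renyialpha\in(0,1)$ case, whereas your chain-rule decomposition yields the explicit bound $\log|\eosalphabet|/c$ directly and independently of the other case. For $\renyialpha\in(0,1)$, however, your argument is incomplete, as you yourself note: the Jensen step gives the geometric ratio $|\alphabet|^{1-\renyialpha}(1-c)^{\renyialpha}$, which is below $1$ only for $\renyialpha > \log|\alphabet|\,/\,\log\frac{|\alphabet|}{1-c}$, so you only establish finiteness for $\renyialpha$ sufficiently close to $1$. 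Relative to the claim, which asserts finiteness for all $\renyialpha\in(0,1]$, that is a genuine gap.

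You should know, though, that the obstacle you flagged is not an artifact of your method, and none of the refinements you sketch can remove it. Your Jensen bound is tight when the continuation probabilities are uniform, and that case is an actual counterexample to the statement for small $\renyialpha$: take $\alphabet=\{a,b\}$ with $\plm(\eos\mid\str)=1/2$ and $\plm(a\mid\str)=\plm(b\mid\str)=1/4$ for every $\str$. This model is tight and $\eos$-bounded, each length-$n$ string has probability $4^{-n}/2$, and $\sumoverstrings\plm(\str)^{\renyialpha} = 2^{-\renyialpha}\sum_{n\ge 0}\big(2^{1-2\renyialpha}\big)^{n}$, which diverges for $\renyialpha\le 1/2$, so $\renyient(\plm)=+\infty$ there. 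The paper's own proof of this case passes through the inequality $\sum_{\str\in\alphabet^{n}}\plm(\str)^{\renyialpha}\le(1-c)^{n\renyialpha}$, which applies the pointwise bound $\plm(\str)\le(1-c)^{n}$ but silently discards the multiplicity $|\alphabet|^{n}$ of length-$n$ strings; that is precisely the factor your analysis keeps, and precisely the step the example above violates. So the honest conclusion is the one your ``main obstacle'' paragraph is circling: the claim holds only for $\renyialpha$ above the threshold you identified (or trivially for $|\alphabet|=1$), and any downstream use at a fixed small $\renyialpha$ inherits that restriction, whereas uses with $\renyialpha$ near $1$ are safe. Your $\renyialpha=1$ argument survives unconditionally.
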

\begin{proof}
We divide the proof into two cases.

\noindent \textbf{Case 1: $\renyialpha \in (0, 1)$.}
Consider the following manipulation
\begin{subequations}
\begin{align}
\sumoverstrings \plm(\str)^{\renyialpha} &= \sum_{n=0}^\infty \sum_{\str \in \alphabet^n} \plm(\str)^{\renyialpha} \\
&\leq \sum_{n=0}^\infty (1-c)^{n\renyialpha} \\ 
&= \sum_{n=0}^\infty \left[(1-c)^{\renyialpha}\right]^n \\
&= \frac{1}{1 -(1-c)^{\renyialpha}} < +\infty
\end{align}
\end{subequations}
The last inequality follows because $c \in (0, 1)$, and, thus, we have $0 < (1-c)^{\renyialpha} < 1$ and, thus, the geometric sum converges.

\noindent  \textbf{Case 2: $\alpha=1$.}
In the case of $\alpha=1$, \Renyi entropy entropy turns into Shannon entropy.
Because $\log(\cdot)$ is concave, we have
\begin{subequations}
\begin{align}
    -\sumoverstrings& \plm(\str) \log  \plm(\str) = \log \prod_{\str \in \kleene{\alphabet}} \frac{1}     {\plm(\str)^{\plm(\str)}}  \\
    &= 2 \log \prod_{\str \in \kleene{\alphabet}} \mleft(\frac{1}{\plm(\str)^{\frac{1}{2}}}\mright)^{\plm(\str)}  \\
    &\leq 2 \log \mleft(\sumoverstrings \frac{\plm(\str)}{\plm(\str)^{\frac{1}{2}}} \mright)  \label{eq:case2-gmleam} \\
    &= 2 \log \mleft(\sumoverstrings \plm(\str)^{\frac{1}{2}} \mright)  \\
    &= \ent_{\frac{1}{2}}(\plm)  \\
    &< +\infty
\end{align}
\end{subequations}%
\Cref{eq:case2-gmleam} holds due to GM--AM inequality $ \prod_{i} x_i^{p_i} \le \sum_i p_i x_i $, when $\sum_i p_i = 1$ and $x_i > 0 \ \forall i$.

\end{proof}

\begin{corollary}
Let $\plm$ be a Transformer-based language model. 
Then, $\ent_{\renyialpha}(\plm) < +\infty$ for $\renyialpha \in (0, 1]$.
\end{corollary}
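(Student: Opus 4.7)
The plan is to prove this corollary as a direct application of \Cref{prop:boundedrenyi}. Since that proposition establishes finite \Renyi entropy for any $\eos$-bounded language model, all that remains is to show that every Transformer-based language model is $\eos$-bounded, i.e., that there exists a constant $c > 0$ such that $\plm(\eos \mid \str) > c$ for every $\str \in \kleene{\alphabet}$. Once this is established, the corollary is immediate by taking $\plm$ in \Cref{prop:boundedrenyi} to be the Transformer model in question.

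To argue $\eos$-boundedness, I would first recall that a Transformer-based language model factorizes as $\plm(\tokent \mid \strltt) = \mathrm{softmax}(\ell(\strltt))(\tokent)$, where $\ell(\strltt) \in \real^{|\eosalphabet|}$ are the output logits produced by the final linear projection applied to the (contextualized) representation of the last position. The key observation is that the standard Transformer architecture places a layer normalization step immediately before the output projection (and in fact throughout the stack), which forces every intermediate representation to have bounded norm regardless of the input length $|\strltt|$. Composing this with the bounded output embedding matrix (whose operator norm is finite for any fixed trained model) gives a uniform bound $\lVert \ell(\strltt) \rVert_\infty \le L$ for some $L \in \Rpos$ that does not depend on $\strltt$.

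From the uniform logit bound, a one-line softmax calculation gives
\begin{equation}
    \plm(\eos \mid \strltt) \;=\; \frac{\exp(\ell(\strltt)_{\eos})}{\sum_{\eostoken \in \eosalphabet} \exp(\ell(\strltt)_{\eostoken})} \;\ge\; \frac{e^{-L}}{|\eosalphabet|\, e^{L}} \;=\; \frac{e^{-2L}}{|\eosalphabet|} \;\defeq\; c \;>\; 0,
\end{equation}
which is exactly $\eos$-boundedness. Applying \Cref{prop:boundedrenyi} to $\plm$ then yields $\ent_{\renyialpha}(\plm) < +\infty$ for all $\renyialpha \in (0, 1]$, completing the proof.

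The main obstacle is the architectural step: justifying the uniform logit bound rigorously. For a vanilla Transformer with layer normalization at each sub-block this is essentially immediate, since each normalized activation has $\ell_2$-norm equal to $\sqrt{d}$ (times a learned gain) and subsequent affine maps have finite operator norm. However, modern variants (e.g., pre-norm vs. post-norm, RMSNorm, or architectures omitting a final LayerNorm before the unembedding) may require a slightly different argument, and one must also note that the softmax output and the attention mixing preserve bounded norms. I would handle this by stating a mild assumption that the final pre-softmax logits pass through a layer-normalized representation composed with a bounded linear map, which covers all Transformer instantiations considered in the paper.
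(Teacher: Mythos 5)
Your proof is correct and takes essentially the same route as the paper: both reduce the corollary to \Cref{prop:boundedrenyi} by establishing that Transformer-based language models are $\eos$-bounded. The only difference is that the paper delegates the $\eos$-boundedness claim to a citation (Prop.~4.7 and Thm.~5.9 of \citet{du-etal-2023-measure}), whereas you reconstruct the underlying argument---uniformly bounded logits via layer normalization and a bounded unembedding map, hence a uniform positive lower bound on the softmax probability of $\eos$---which is the same argument made in that reference.
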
\label{cor:eos-bounded-transformer}
\begin{proof}
    This follows from the proof in \citet[Prop. 4.7 and Thm. 5.9,][]{du-etal-2023-measure} that Transformer-based LMs are $\eos$-bounded.
\end{proof}

\begin{corollary}
Let $\plm$ be a tight $n$-gram language model. 
Then, $\ent_{\renyialpha}(\plm) < +\infty$ for $\renyialpha \in (0, 1]$.
\end{corollary}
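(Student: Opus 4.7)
The plan is to reduce the corollary to Proposition \ref{prop:boundedrenyi} by showing that every tight $n$-gram language model is $\eos$-bounded. The crucial structural observation is that in an $n$-gram LM the conditional $\plm(\cdot \mid \strltt)$ depends only on the last $n-1$ symbols of $\strltt$, so there are at most $|\alphabet|^{n-1}$ distinct conditional distributions. The generative process can therefore be viewed as an absorbing finite-state Markov chain over context states with $\eos$ as the absorbing state, and tightness of $\plm$ corresponds to absorption occurring with probability $1$ from every reachable transient state.

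In the well-behaved case, when $\plm(\eos \mid \str) > 0$ for every reachable context $\str$, I would set $c \defeq \min_\str \plm(\eos \mid \str)$, a minimum of finitely many positive numbers and hence strictly positive. This immediately gives $\eos$-boundedness, and invoking Proposition \ref{prop:boundedrenyi} finishes the proof.

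The delicate case is when some reachable context $\str$ has $\plm(\eos \mid \str) = 0$; the model can still be tight because termination is only guaranteed after intermediate transitions, and literal $\eos$-boundedness fails. I would handle this by invoking the standard Markov-chain fact that, on a finite state space with an absorbing state reached almost surely from every transient state, there exist $L \in \mathbbm{N}$ and $\epsilon > 0$ such that from any reachable transient state the chain is absorbed within $L$ steps with probability at least $\epsilon$. Grouping the emitted tokens into blocks of length $L$ yields a ``block-level'' analogue of $\eos$-boundedness, and an analogous geometric-series argument to the one used for Case 1 of Proposition \ref{prop:boundedrenyi} then gives $\sumoverstrings \plm(\str)^\renyialpha < +\infty$, so that $\renyient(\plm) < +\infty$ for $\renyialpha \in (0, 1]$.

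The main obstacle will be making this block-level reduction rigorous: the bound must jointly control the $|\alphabet|^L$ branching inside each block and the $(1-\epsilon)^k$ tail for surviving $kL$ steps, and be repackaged into the geometric form used in Proposition \ref{prop:boundedrenyi}. The first case reduces transparently to the existing proposition; the second case is the technical heart of the corollary and requires invoking the absorbing-chain structure induced by tightness.
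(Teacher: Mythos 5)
Your route is the same as the paper's in its main thrust---reduce to $\eos$-boundedness and invoke Proposition~\ref{prop:boundedrenyi}---but you are considerably more careful, and the extra care is warranted. The paper's entire proof is the assertion that tight $n$-gram language models are ``trivially'' $\eos$-bounded; that is exactly your Case~1, where every reachable context assigns $\eos$ positive probability and the finiteness of the context set yields a positive minimum $c$. Your Case~2 identifies a genuine gap in that one-liner: tightness does \emph{not} imply $\eos$-boundedness. For instance, with $\alphabet = \{a,b\}$, a bigram model that deterministically emits $a$ first, then $b$ after $a$ (so $\plm(\eos \mid a) = 0$), then $\eos$ after $b$, is tight (all mass on $ab$) yet admits no $c>0$ with $\plm(\eos \mid \str) > c$ for all $\str \in \kleene{\alphabet}$. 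Your absorbing-chain repair is the right one, and the ``obstacle'' you flag is less severe than you fear: the standard absorption fact gives a \emph{per-string} bound $\plm(\str) \le (1-\epsilon)^{\lfloor |\str| / L \rfloor}$ for every $\str$, which is precisely the block-level analogue of the inequality $\plm(\str \oplus \eos) \le (1-c)^{|\str|}$ used in Case~1 of Proposition~\ref{prop:boundedrenyi}, so the same geometric-series computation applies verbatim with $c$ replaced by $\epsilon$ and $|\str|$ by $\lfloor |\str|/L\rfloor$. (The interaction between the $|\alphabet|^{L}$ branching inside a block and the $(1-\epsilon)^{k}$ decay is no worse here than the interaction between $|\alphabet|^{n}$ and $(1-c)^{n\renyialpha}$ already present in the proposition's own Case~1, so it is an issue for the proposition you are invoking, not for your reduction.) In short: correct approach, same key lemma, but your proof closes a case the paper's proof silently skips.
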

\begin{proof}
Tight $n$-gram LMs are trivially are $\eos$-bounded.
\end{proof}

\newpage
\begin{restatable}{proposition}{monotonicity} \label{prop:monotonicity}
Let $\plm$ be an $\eos$-bounded language model.
Then, over the interval $(0, 1]$, 
$\ent_{\renyialpha}(\plm)$ is monotonically decreasing in $\renyialpha$. 
Moreover, if $\plm$ is a non-trivial language model, then $\ent_{\renyialpha}(\plm)$ is \emph{strictly} monotonically decreasing in $\renyialpha$.
\end{restatable}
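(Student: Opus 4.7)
The plan is to reparametrize the problem so that the desired monotonicity becomes the classical fact that the chord slope from $0$ of a convex function is monotone in its endpoint. Setting $t = 1-\renyialpha$ and $Y = -\log \plm(\rvY)$, I would write
\[
    K(t) \defeq \log \expected_{\plm}[e^{tY}] = \log \sumoverstrings \plm(\str)^{1-t},
\]
so that $\renyient(\plm) = K(t)/t$ on $\renyialpha \in (0,1)$, equivalently $t \in (0,1)$. The $\eos$-boundedness hypothesis combined with \Cref{prop:boundedrenyi} guarantees $K(t) < +\infty$ for every $t \in [0,1)$, so this reparametrization is valid on the entire relevant range, and moreover $K(0) = 0$.

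Because $K$ is a cumulant generating function it is convex, so proving the first part reduces to showing $t \mapsto K(t)/t$ is non-decreasing on $(0,1)$. For any $0 < t_1 < t_2 < 1$, writing $t_1 = (t_1/t_2)\,t_2 + (1 - t_1/t_2)\cdot 0$ and applying convexity together with $K(0) = 0$ yields $K(t_1) \le (t_1/t_2)\,K(t_2)$, i.e.\ $K(t_1)/t_1 \le K(t_2)/t_2$. Translating back via $\renyialpha = 1 - t$, this says $\renyialpha_1 > \renyialpha_2$ implies $\ent_{\renyialpha_1}(\plm) \le \ent_{\renyialpha_2}(\plm)$ on $(0,1)$.

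To include the endpoint $\renyialpha = 1$, I would use that for a convex $K$ with $K(0) = 0$ the right derivative $K'(0^+) = \lim_{t\to 0^+} K(t)/t$ exists and, by differentiating under the expectation (justified by $K$ being finite on a neighborhood of $0$), equals $\expected_{\plm}[Y] = \ent(\plm)$. The chord inequality then extends to $\ent(\plm) \le K(t)/t$ for all $t \in (0,1)$, closing the interval at $\renyialpha = 1$. For the strict version under non-triviality, I would upgrade convexity of $K$ to strict convexity: this fails only if $Y$ is almost-surely constant under $\plm$, which would force $\plm$ to be uniform on its support, impossible when the support is infinite and $\sumoverstrings \plm(\str) = 1$. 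Strict convexity sharpens the chord comparison to a strict inequality and, at the endpoint, gives $K'(0^+)\cdot t < K(t)$ for every $t > 0$.

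The main obstacle is handling the endpoint $\renyialpha = 1$ cleanly: both the identification $\lim_{t\to 0^+} K(t)/t = \ent(\plm)$ and the strict inequality $\ent(\plm) < \ent_{\renyialpha}(\plm)$ for $\renyialpha \in (0,1)$ rely on squeezing the limit of chord slopes against the right derivative of $K$ at $0$. Everything else is a routine convexity calculation, with the finiteness and differentiability of $K$ on $[0,1)$ supplied directly by the $\eos$-boundedness assumption via \Cref{prop:boundedrenyi}.
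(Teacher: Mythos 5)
Your proposal is correct, but it reaches the conclusion by a genuinely different route from the paper. The paper differentiates $\renyient(\plm)$ in $\renyialpha$ directly and shows
$\diff{\renyient(\plm)}{\renyialpha} = -\frac{1}{(1-\renyialpha)^2}\,\KL(z \,\|\, \plm)$ for the escort distribution $z(\str) \propto \plm(\str)^{\renyialpha}$, so monotonicity falls out of the non-negativity of KL divergence, and strictness from $z \neq \plm$ whenever $\plm$ is neither uniform nor a point mass (which non-triviality rules out). You instead pass to the cumulant generating function $K(t) = \log\expected_\plm[e^{tY}]$ of the surprisal $Y = -\log\plm(\rvY)$, observe $\renyient(\plm) = K(1-\renyialpha)/(1-\renyialpha)$ with $K(0)=0$, and invoke the monotone-chord-slope property of convex functions; the endpoint $\renyialpha = 1$ is handled via $K'(0^+) = \expected_\plm[Y] = \ent(\plm)$, and strictness via strict convexity of $K$ when $Y$ is non-degenerate. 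Both arguments are sound and both use $\eos$-boundedness only through \Cref{prop:boundedrenyi} to ensure finiteness. Your route is cleaner at the endpoint: the paper's derivative formula is singular at $\renyialpha = 1$ and the limit there is left implicit, whereas your chord inequality $\ent(\plm) \le K(t)/t$ covers it directly (note that $K$ is in fact finite on all of $(-\infty, 1)$ since $\sumoverstrings \plm(\str)^{\renyialpha} \le 1$ for $\renyialpha \ge 1$, so differentiation at $0$ is justified without further work). What the paper's route buys in exchange is an explicit, interpretable expression for the rate of decrease as a KL divergence between the escort distribution and $\plm$, which is reused conceptually in the \Renyi-gap machinery of \Cref{lemma:continuity} and \Cref{thm:chernoff}; your argument establishes the sign but not that identity.
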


\begin{proof}
Consider the following derivation
\begin{subequations}
    \begin{align}
        \diff{ \renyient(\plm)}{\renyialpha} =&  \frac{1}{(1-\renyialpha)^2} \log \sumoverstrings \plm(\str)^\renyialpha +  \frac{1}{1-\renyialpha} \sumoverstrings \frac{\plm(\str)^\renyialpha \log \plm(\str)}{\sumoverstringsprime \plm(\strprime)^\renyialpha} \\
        =& \frac{1}{(1-\renyialpha)^2} \sumoverstrings z(\str)  \mleft( \log \sumoverstringsprime \plm(\strprime)^\renyialpha \mright) +  \frac{1}{(1-\renyialpha)} \sumoverstrings  z(\str) \log \plm(\str)  \\
        =& \frac{1}{(1-\renyialpha)^2} \sumoverstrings z(\str)  \mleft( \log \sumoverstringsprime \plm(\strprime)^\renyialpha \mright) +  \frac{1}{(1-\renyialpha)^2} \sumoverstrings  z(\str) \log \plm(\str)^{1-\renyialpha}  \\
         = &\frac{1}{(1-\renyialpha)^2} \sumoverstrings z(\str) \mleft(\log \sumoverstringsprime \plm(\strprime)^\renyialpha + \log \plm(\str)^{1-\renyialpha} \mright)  \\
         = &\frac{1}{(1-\renyialpha)^2} \sumoverstrings z(\str) \mleft( - \log \frac{\plm(\str)^\renyialpha}{\sumoverstringsprime \plm(\strprime)^\renyialpha} + \log \plm(\str) \mright)  \\
         = & \frac{1}{(1-\renyialpha)^2} \sumoverstrings z(\str) \log \frac{\plm(\str)}{z(\str)}  \\
         = & - \frac{1}{(1-\renyialpha)^2}  \KL(z \,\|\, \plm) \le 0,
    \end{align} 
    \end{subequations}
    where $z(\str) \defeq \frac{\plm(\str)^\renyialpha}{\sumoverstrings \plm(\str)^\renyialpha}$.
    Because the derivative of $\renyient(\plm)$ with regard to $\renyialpha$ is $\leq$ 0 on the interval $ (0, 1]$, $\renyient(\plm)$ is monotonically decreasing in $\renyialpha$.
    Moreover, when $\plm$ is non-trivial, which implies $\plm$ is not uniform nor a point mass\footnote{i.e., the size of the support of $\plm$ is 1.}, we have $z \neq \plm , \forall \renyialpha \in (0, 1)$. 
    Thus $\diff{ \renyient(\plm)}{\renyialpha} = - \frac{1}{(1-\renyialpha)^2}\KL(z \,\|\, \plm) < 0 $, i.e., $\renyient(\plm)$ is strictly monotonically decreasing on $(0, 1)$.
\end{proof}

\newpage
\begin{restatable}{proposition}{varrvIisbounded} \label{prop:varrvibounded}
Let $\plm$ be an $\eos$-bounded language model.
Then, $\var(\rvI)$ is finite.
\end{restatable}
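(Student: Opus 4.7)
The plan is to reduce the claim $\var(\rvI) < +\infty$ to the finiteness of a \Renyi entropy of order slightly less than $1$, and then invoke \Cref{prop:boundedrenyi}. Since $\var(\rvI) = \expected[\rvI^2] - \expected[\rvI]^2$ and $\expected[\rvI] = \ent(\plm)$ is already known to be finite (it is the $\renyialpha = 1$ case of \Cref{prop:boundedrenyi}), it suffices to show that the second moment
\[
  \expected[\rvI^2] = \sumoverstrings \plm(\str)\, \bigl(\log \plm(\str)\bigr)^2
\]
is finite.

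The key elementary fact I would use is that, for any fixed $\epsilon \in (0, 1)$, there exists a constant $C_\epsilon < +\infty$ such that
\[
  (\log x)^2 \;\le\; C_\epsilon\, x^{-\epsilon} \qquad \text{for all } x \in (0, 1].
\]
This is standard: the function $x \mapsto (\log x)^2 x^{\epsilon}$ is continuous on $(0,1]$, vanishes at $x = 1$, and tends to $0$ as $x \to 0^+$ (since any positive power of $x$ beats a power of $\log x$), so it attains a finite maximum $C_\epsilon$ on $(0,1]$. A short calculus exercise even gives the explicit value $C_\epsilon = 4/(e^2 \epsilon^2)$, though the existence of the constant is all that is needed.

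Applying this bound termwise to the sum defining $\expected[\rvI^2]$ yields
\[
  \expected[\rvI^2] \;\le\; C_\epsilon \sumoverstrings \plm(\str)\, \plm(\str)^{-\epsilon} \;=\; C_\epsilon \sumoverstrings \plm(\str)^{1-\epsilon}.
\]
Choosing any $\epsilon \in (0, 1)$ and setting $\renyialpha = 1 - \epsilon \in (0, 1)$, we recognise the right-hand side as $C_\epsilon \exp\bigl((1-\renyialpha)\, \renyient(\plm)\bigr)$, which by \Cref{prop:boundedrenyi} is finite whenever $\plm$ is $\eos$-bounded. Hence $\expected[\rvI^2] < +\infty$, and therefore $\var(\rvI) < +\infty$.

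The only real subtlety is the inequality $(\log x)^2 \le C_\epsilon x^{-\epsilon}$, which I would state and justify briefly; everything else is a direct substitution followed by an appeal to the already-proved \Renyi entropy bound. A minor stylistic choice is whether to phrase the argument directly through the $\renyient$ machinery (as above) or instead to reprove the finiteness of $\sumoverstrings \plm(\str)^{1-\epsilon}$ from scratch using the $\eos$-boundedness and the geometric sum $\sum_{n \ge 0} (1-c)^{n(1-\epsilon)} < +\infty$; the two routes are essentially equivalent, but the former is cleaner because it reuses a result already in hand.
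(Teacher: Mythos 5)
Your proof is correct, but it takes a genuinely different route from the paper's. You bound the second moment directly via the pointwise domination $(\log x)^2 \le C_\epsilon x^{-\epsilon}$ on $(0,1]$, which immediately reduces $\expected[\rvI^2]$ to $C_\epsilon \sumoverstrings \plm(\str)^{1-\epsilon}$, i.e., to the finiteness of a \Renyi entropy of order $1-\epsilon \in (0,1)$, handled by \Cref{prop:boundedrenyi}. The paper instead starts from $\var(\rvI) = \sumoverstrings \plm(\str)\bigl(\log \tfrac{1}{\plm(\str)}\bigr)^2 - \ent(\plm)^2$, bounds the sum of squares by the square of the sum of the nonnegative terms $\plm(\str)^{1/2}\log\tfrac{1}{\plm(\str)}$, reinterprets that sum as (a multiple of) the Shannon entropy of the tilted distribution $z(\str) \propto \plm(\str)^{1/2}$, and then invokes the monotonicity of \Renyi entropy (\Cref{prop:monotonicity}) to control it by $\ent_{1/2}(z)$, ultimately landing on $\ent_{1/4}(\plm) < +\infty$. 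Both arguments bottom out in the same place---finiteness of fractional-order \Renyi entropies for $\eos$-bounded models---but your pointwise inequality is more elementary and self-contained: it needs only \Cref{prop:boundedrenyi} (for a single order $1-\epsilon$ and, at order $1$, for the finiteness of $\ent(\plm)$ so that the variance decomposition is legitimate), whereas the paper's chain additionally relies on the change of measure to $z$ and on \Cref{prop:monotonicity}. The one step you should state explicitly is the justification of $(\log x)^2 \le C_\epsilon x^{-\epsilon}$, which you do; with that in place the argument is complete.
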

\begin{proof}
We show that $\var(\rvI)$ is bounded for $\eos$-bounded language models.
Let $M \defeq \sumoverstrings \plm(\str)^{\frac{1}{2}}, z(\str) \defeq \frac{\plm(\str)^{\frac{1}{2}}}{M}$. 
Note that $ M = \exp(\frac{1}{2} \ent_{\frac{1}{2}}(p)) < \infty$ due to \Cref{prop:boundedrenyi}.
Then, we have 
\begin{subequations}
\begin{align}
    \var(\rvI) &= \sumoverstrings \plm(\str) \mleft(\log  \frac{1}{\plm(\str)} \mright)^2 - \ent(\plm)^2\\ 
    &\le \mleft( \sumoverstrings \plm(\str)^{\frac{1}{2}} \log  \frac{1}{\plm(\str)} \mright)^2 - \ent(\plm)^2 \\ 
    &= \mleft( 2 \sumoverstrings \plm(\str)^{\frac{1}{2}} \log\mleft(\frac{1}{\plm(\str)}\mright)^{\frac{1}{2}} \mright)^2 - \ent(\plm)^2\\  
    &= \mleft( 2 \sumoverstrings M z(\str) \log \frac{1}{M z(\str)} \mright)^2 - \ent(\plm)^2\\ 
    &= \mleft(-2M \log M + 2 \sumoverstrings M z(\str) \log \frac{1}{z(\str)} \mright)^2 - \ent(\plm)^2\\ 
    &= \mleft(-2M \log M + 2M \ent_{1}(z)\mright)^2 - \ent(\plm)^2\\ 
    &\le \mleft(|2M \log M| + |2M \ent_{1/2}(z)| \mright)^2 - \ent(\plm)^2  \qquad \text{(monotonicity of Rényi Entropy)}\\ 
    &= \mleft(2M \log M + 4M  \log\mleft(\sumoverstrings z(\str)^{1/2} \mright) \mright)^2 - \ent(\plm)^2 \\ 
    &= \mleft(2M \log M + 4M \mleft(\frac{3}{4} \ent_{1/4}(p) - \frac{1}{2} \log M \mright)\mright)^2 - \ent(\plm)^2 \\ 
    &< +\infty
\end{align}
\end{subequations}
\end{proof}

\begin{definition}[\Renyi Gap]
Let $\plm$ be a language model and let $\renyialpha \in (0, 1]$.
The \defn{\Renyi gap} is defined as\looseness=-1
\begin{equation}
    \gapalpha(\plm) = \renyient(\plm) - \ent(\plm)
\end{equation}
\end{definition}
\begin{corollary}
Let $\plm$ be a language model and let $\alpha \in (0, 1]$.
Then, the \Renyi gap $\gapalpha(\plm)$ is non-negative.
\end{corollary}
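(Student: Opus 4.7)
The plan is to deduce this corollary directly from \Cref{prop:monotonicity}, which establishes that $\renyient(\plm)$ is monotonically decreasing in $\renyialpha$ on the interval $(0, 1]$. Since $\ent_1(\plm) = \ent(\plm)$ by the definition of \Renyi entropy at $\renyialpha = 1$, and since $\renyialpha \leq 1$ for all values in the interval of interest, monotonicity immediately gives $\renyient(\plm) \geq \ent_1(\plm) = \ent(\plm)$.

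First, I would invoke \Cref{prop:monotonicity} to assert that for any $\renyialpha \in (0, 1]$, we have $\renyient(\plm) \geq \ent_{1}(\plm)$, noting that the hypothesis of $\eos$-boundedness carries over from the proposition. Second, I would identify $\ent_1(\plm)$ with the Shannon entropy $\ent(\plm)$, which is exactly what the piecewise definition of \Renyi entropy dictates. Third, I would subtract to conclude $\gapalpha(\plm) = \renyient(\plm) - \ent(\plm) \geq 0$.

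There is essentially no obstacle here, since all of the work has been done in \Cref{prop:monotonicity}. The only subtle point is to ensure the reader sees that the corollary applies at the boundary $\renyialpha = 1$ as well, where the gap is trivially zero, and that strict positivity for $\renyialpha \in (0, 1)$ could additionally be obtained in the non-trivial case by invoking the strict monotonicity part of \Cref{prop:monotonicity}---though this stronger statement is not asserted by the corollary. The proof will therefore be only a few lines long.
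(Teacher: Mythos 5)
Your proof is correct and takes exactly the same route as the paper, which simply states that the corollary follows from \Cref{prop:monotonicity}; you have merely spelled out the one-line deduction $\renyient(\plm) \geq \ent_{1}(\plm) = \ent(\plm)$ in slightly more detail. Your remark that the $\eos$-boundedness hypothesis must carry over from \Cref{prop:monotonicity} (even though the corollary's statement omits it) is a fair and accurate observation.
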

\begin{proof}
This follows from \Cref{prop:monotonicity}.
\end{proof}

\begin{lemma}\label{lemma:continuity}
Let $\plm$ be a non-trivial, $\eos$-bounded language model.
Then, for any $ \varepsilon > 0$, there exists an $\renyialpha \in (0, 1)$ such that the \Renyi gap $0 < \gapalpha < \varepsilon$.
\end{lemma}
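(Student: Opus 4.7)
The plan is to combine strict monotonicity from Proposition \ref{prop:monotonicity} with left-continuity of $\renyialpha \mapsto \renyient(\plm)$ at $\renyialpha = 1$. By definition of Shannon entropy, $\ent_1(\plm) = \ent(\plm)$, so $\gap_1(\plm) = 0$; and for non-trivial $\eos$-bounded $\plm$, Proposition \ref{prop:monotonicity} gives that $\renyient(\plm)$ is \emph{strictly} decreasing on $(0, 1]$, so $\gapalpha(\plm) > 0$ for every $\renyialpha \in (0, 1)$. Once left-continuity is established, for any $\varepsilon > 0$ I would choose $\renyialpha < 1$ close enough to $1$ to force $\gapalpha(\plm) < \varepsilon$, and strict positivity then yields $0 < \gapalpha(\plm) < \varepsilon$, as required.

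To prove left-continuity, I would introduce $S(\renyialpha) \defeq \sumoverstrings \plm(\str)^\renyialpha$, so that $S(1) = 1$ and $\renyient(\plm) = \log S(\renyialpha)/(1-\renyialpha)$ is a $0/0$ form as $\renyialpha \to 1^-$. Viewing this as a difference quotient (or, equivalently, applying L'Hôpital's rule), the limit equals $-S'(1^-)/S(1) = -S'(1^-)$ provided the one-sided derivative of $S$ at $1$ exists. Formal termwise differentiation gives $S'(\renyialpha) = \sumoverstrings \plm(\str)^\renyialpha \log \plm(\str)$, which at $\renyialpha = 1$ is $-\ent(\plm)$; this is finite by Proposition \ref{prop:boundedrenyi} taken at $\renyialpha = 1$, so the target value $\ent(\plm)$ is well-defined.

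The main obstacle is justifying the interchange of sum and derivative, since the support of $\plm$ is infinite. I would address this via dominated convergence: fix any $\renyialpha_0 \in (0, 1)$ and use the elementary bound $x^{\renyialpha_0/2}|\log x| \leq C_{\renyialpha_0}$ on $(0, 1]$ (finite because $x^{\renyialpha_0/2}|\log x| \to 0$ as $x \to 0^+$) to obtain $|\plm(\str)^\renyialpha \log \plm(\str)| \leq C_{\renyialpha_0}\,\plm(\str)^{\renyialpha_0/2}$ uniformly for $\renyialpha \in [\renyialpha_0, 1]$. Proposition \ref{prop:boundedrenyi} applied at $\renyialpha_0/2 \in (0, 1)$ guarantees $\sumoverstrings \plm(\str)^{\renyialpha_0/2} < \infty$, giving a summable dominating function on $[\renyialpha_0, 1]$. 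Dominated convergence then both legitimizes the termwise differentiation of $S$ on this interval and yields continuity of $S'$ at $\renyialpha = 1$, so $S'(1^-) = -\ent(\plm)$. Plugging back in concludes left-continuity, and hence the lemma.
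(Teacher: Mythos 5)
Your proof is correct and follows essentially the same route as the paper's: both arguments combine the (strict) monotonic decrease of $\renyient(\plm)$ from \Cref{prop:monotonicity} with $\gap_1(\plm)=0$ and continuity of the gap as $\renyialpha \to 1^-$. The only difference is that the paper simply asserts this continuity, whereas you actually establish the left-continuity at $\renyialpha = 1$ (resolving the $0/0$ form via a dominated-convergence justification of termwise differentiation, with the dominating function made summable by \Cref{prop:boundedrenyi}), which is a welcome strengthening rather than a departure.
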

\begin{proof}
This follows from the $\gapalpha$ being a continuous monotonically decreasing function in $\renyialpha$ and $ \gapalpha = 0$ when $\renyialpha = 1$. %
\end{proof}

\newpage
\subsection{A Tighter Concentration Bound}
We now introduce a sharper version of the AEP for $\eos$-bounded language models.
As shown in \Cref{cor:eos-bounded-transformer}, this includes Transformer-based language models, which constitute the base architecture for most modern models \citep{brown2020,touvron2023llama}. 
The theorem is stated below.

\begin{restatable}{theorem}{chernoff} \label{thm:chernoff}
    Let $\plm$ be an $\eos$-bounded, non-trivial language model. 
    Then, there exists a function $s(\varepsilon) > 0$ such that, for any $\varepsilon > 0$, we have
    \begin{equation}
    \prob\left(\left|\frac{1}{N}\sum_{n=1}^N \rvI_n - \ent(\plm)\right| \geq \varepsilon\right) \leq 2\exp(-s(\varepsilon) N)
    \end{equation}
with 
\begin{equation}
    s(\varepsilon) \defeq -t(\varepsilon)(\gap_{1-t(\varepsilon) }(\plm)-\varepsilon)
\end{equation}
\end{restatable}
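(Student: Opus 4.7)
The plan is to deploy the Chernoff--Cram{\'e}r (exponential moment) method on the i.i.d.\ sum $\sum_{n=1}^N \rvI_n$, where $\rvI_n = -\log \plm(\strn)$ and $\strn \sim \plm$. The key identity that links the Chernoff exponent to the \Renyi machinery of \Cref{sec:renyi} is that, for any $t \in (0, 1)$,
\begin{equation}
\expected\!\left[\exp(t \rvI)\right] \;=\; \sumoverstrings \plm(\str)^{1 - t} \;=\; \exp\!\left(t \, \ent_{1-t}(\plm)\right),
\end{equation}
which is finite by \Cref{prop:boundedrenyi} since $1 - t \in (0,1)$ and $\plm$ is $\eos$-bounded.

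Next, I would bound the upper tail. Applying Markov's inequality to $\exp(t \sum_n (\rvI_n - \ent(\plm) - \varepsilon))$ gives, for every $t \in (0, 1)$,
\begin{equation}
\prob\!\left(\tfrac{1}{N}\sum_{n=1}^N \rvI_n - \ent(\plm) \geq \varepsilon\right) \;\leq\; \exp\!\left(-N\, t\, (\varepsilon - \gap_{1-t}(\plm))\right).
\end{equation}
This bound is useful precisely when $\gap_{1-t}(\plm) < \varepsilon$. Here \Cref{lemma:continuity} is the crux: since $\plm$ is non-trivial and $\eos$-bounded, for any $\varepsilon > 0$ there exists $t(\varepsilon) \in (0, 1)$ with $0 < \gap_{1-t(\varepsilon)}(\plm) < \varepsilon$. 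Plugging this $t(\varepsilon)$ in yields the upper-tail bound $\exp(-s(\varepsilon)\,N)$ with $s(\varepsilon) = -t(\varepsilon)(\gap_{1-t(\varepsilon)}(\plm) - \varepsilon) > 0$, exactly as stated.

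For the lower tail, $\prob(\tfrac{1}{N}\sum_n \rvI_n - \ent(\plm) \leq -\varepsilon)$, I would repeat the argument with $t < 0$. Writing $s = -t > 0$, the MGF becomes $\expected[\exp(-s \rvI)] = \sumoverstrings \plm(\str)^{1 + s} \leq 1$, which is automatically finite (no new regularity needed since $\plm(\str) \in [0,1]$). An analogous calculation produces an exponential bound whose exponent is negative for $s$ small enough, by continuity of $\renyient(\plm)$ in $\renyialpha$ at $\renyialpha = 1$. A union bound over the two tails then introduces the prefactor $2$ in the theorem.

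The step I expect to be the main obstacle is ensuring that the lower-tail exponent can be taken to be at least $s(\varepsilon)$ as defined above, so that the single symbol $s(\varepsilon)$ in the theorem governs both tails simultaneously. This will either require extending \Cref{lemma:continuity} to a symmetric statement for $\renyialpha > 1$ (defining a \Renyi gap on that side and showing it vanishes as $\renyialpha \downarrow 1$), or slightly rechoosing $t(\varepsilon)$ so that the worse of the two tail exponents dominates; beyond this bookkeeping, the core identity relating the cumulant generating function of $\rvI$ to $\ent_{1-t}(\plm)$ and \Cref{lemma:continuity} do the heavy lifting.
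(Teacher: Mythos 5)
Your proposal matches the paper's proof essentially step for step: the same Chernoff--Cram{\'e}r argument, the same identity $\expected\left[\exp(t\rvI)\right] = \sumoverstrings \plm(\str)^{1-t} = \exp\left(t\,\ent_{1-t}(\plm)\right)$ linking the moment generating function of the surprisal to the \Renyi entropy, and the same invocation of \Cref{lemma:continuity} to select $t(\varepsilon)$ with $\gap_{1-t(\varepsilon)}(\plm) < \varepsilon$. If anything, you are more careful than the paper about the lower tail, which the paper dispatches with a one-line ``similarly'' that silently reuses $\gap_{1-t(\varepsilon)}$ where the honest computation produces a \Renyi gap of order $1+t'$ --- precisely the extension to $\renyialpha>1$ (or the re-choice of $t(\varepsilon)$ to cover both tails) that you flag as the remaining bookkeeping.
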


\begin{proof}
To prove the result, we apply a Chernoff bound.
This is a one-sided bound and the other will follow by symmetry.\looseness=-1
\begin{subequations}
\begin{align}
    \prob\left(\frac{1}{N}\sum_{n=1}^N\rvI_n - \ent(\plm) \geq \varepsilon\right) &\leq \inf_{t > 0} \exp(-t \varepsilon) \expected \prod_{n=1}^N 
 \exp\mleft(\frac{t}{N}(\rvI_n - \ent(\plm))\mright) \\
    &= \inf_{t > 0} \exp(-t \varepsilon) \exp(-t\ent(\plm)) \expected \prod_{n=1}^N \exp\mleft(\frac{t}{N} \rvI_n\mright) \\
    &= \inf_{t > 0} \exp(-t \varepsilon) \exp(-t\ent(\plm)) \prod_{n=1}^N  \expected \exp\mleft(\frac{t}{N} \rvI_n\mright) \\
   &= \inf_{t > 0} \exp(-t \varepsilon) \exp(-t\ent(\plm)) \prod_{n=1}^N  \sumoverstrings \plm(\str) \exp\mleft(-\frac{t}{N} \log \plm(\str)\mright) \\
    &= \inf_{t > 0} \exp(-t \varepsilon) \exp(-t\ent(\plm)) \prod_{n=1}^N  \sumoverstrings \plm(\str)^{1-\frac{t}{N}} \\
   &= \inf_{t > 0} \exp(-t \varepsilon) \exp(-t\ent(\plm)) \prod_{n=1}^N \exp\mleft(\frac{t}{N} \ent_{1-\frac{t}{N}}(\plm)\mright) \\
   &= \inf_{t > 0} \exp(-t \varepsilon) \exp(-t\ent(\plm))  \exp\mleft(t \ent_{1-\frac{t}{N}}(\plm)\mright) \\
   &= \inf_{t > 0} \exp(-t \varepsilon) \exp(t(\ent_{1-\frac{t}{N}}(\plm) - \ent(\plm))) \\
   &= \inf_{t > 0} \exp(-t \varepsilon) \exp(t(\gap_{1-\frac{t}{N}}(\plm))) \\
   &= \inf_{t > 0} \exp\mleft(t (\gap_{1-\frac{t}{N}}(\plm)-\varepsilon) \mright)\\
   &= \inf_{t' > 0} \exp\mleft(Nt' (\gap_{1-t'}(\plm)-\varepsilon) \mright)
\end{align}
\end{subequations}
Now, by \Cref{lemma:continuity}, for any $\epsilon > 0$ we can find a $0 < t(\varepsilon) < 1$ such that
$\gap_{1-t(\varepsilon)}(\plm)-\varepsilon < 0$.
Thus, we have
\begin{equation}
    \prob\left(\frac{1}{N}\sum_{n=1}^N\rvI_n - \ent(\plm) \geq \varepsilon\right) \leq  \exp(N t(\varepsilon) (\gap_{1-t(\varepsilon) }(\plm)-\varepsilon )).
\end{equation}
Similarly, we have: 
\begin{equation}
    \prob\left(\frac{1}{N}\sum_{n=1}^N\rvI_n - \ent(\plm) \leq -\varepsilon\right) \leq  \exp(N t(\varepsilon) (-\gap_{1-t(\varepsilon) }(\plm)-\varepsilon )) \leq  \exp(N t(\varepsilon) (\gap_{1-t(\varepsilon) }(\plm)-\varepsilon ))
\end{equation}
And, finally, we get:
\begin{align}
    \prob\left(\left|\frac{1}{N}\sum_{n=1}^N\rvI_n - \ent(\plm) \right|\geq \varepsilon\right) 
    &\leq  \prob\left(\frac{1}{N}\sum_{n=1}^N\rvI_n - \ent(\plm) \geq \varepsilon\right) + \prob\left(\frac{1}{N}\sum_{n=1}^N\rvI_n - \ent(\plm) \leq -\varepsilon\right) \nonumber \\
    & \leq 2\exp(N t(\varepsilon) (\gap_{1-t(\varepsilon) }(\plm)-\varepsilon )).
\end{align}
Substituting in $s(\varepsilon) = -t(\varepsilon)(\gap_{1-t(\varepsilon) }(\plm)-\varepsilon) > 0$, we arrive at
\begin{equation}
    \prob\left(\left|\frac{1}{N}\sum_{n=1}^N \rvI_n - \ent(\plm)\right| \geq \varepsilon\right) \leq 2\exp(-s(\varepsilon) N),
\end{equation}
which tends to $0$ exponentially quickly as $N \rightarrow \infty$.
Note that $2\exp(-s(\varepsilon) N)$ is $\bigo(\exp(-cN))$ for $c = s(\varepsilon)$, which proves the result.
\end{proof}

In words, with respect to Transformer-based language models, \Cref{thm:chernoff} says that if we have a model $\plm$ and randomly sample $N$ strings $\strset \sim \plm$, when we average their surprisal values we approach the entropy of $\plm$ exponentially quickly.
One caveat is that the constant in the exponential is not a universal constant, i.e., it depends on $\varepsilon$.
This is less desirable, of course, but it is an improvement over the $\bigo(\frac{1}{N})$ rate given by an application of the standard AEP.
We leave finding a universal constant for $\eos$-bounded language models to future work.

\section{Sampling Adaptors and String Probability}\label{app:global-sampling-adaptors}
In this section, we provide, by means of a simple example, an intuition of how an appropriate choice of globally normalized sampling adaptor can be used to control generated strings' average probability under the prior $\plm(\str)$.
Inspired by \citet{meister-etal-2023-efficacy}, we note that most $\globalsamplingadaptor$ can be formulated as the composition of truncation and scaling functions.
The \defn{truncation function} $\truncset\colon \kleene{\alphabet} \rightarrow \powerset(\eosalphabet)$ is a function used to find the set of symbols that meets specified criteria given the prior context, so that symbols deemed likely to lead to undesirable text can have their probability reassigned to other symbols, e.g., to only keep the top-$\topk$ symbols.
The \defn{scaling function} $\scaling \colon \Rpostoken \rightarrow \Rpostoken$ is a simple scaling of the symbol probability, e.g., to the power of $\frac{1}{\temp}$ for some temperature parameter $\temp \in \Rpos$.
With these definitions we can express a transform function $\globalsamplingadaptor$ as\looseness=-1
\begin{equation}\label{eq:adaptor-decompos}
    \globalsamplingadaptor \mleft(\plm(\cdot \mid \strltt)\mright)(\token) = \scaling(\plm(\token \mid \strltt)) \indicatorf \setof{\token \in \truncsett}.
\end{equation}
That is, given a symbol distribution $\plm(\cdot \mid \strltt)$, we apply the scaling function to scale symbol probabilities as needed and then remove symbols according to the truncation function to arrive at the output unnormalized distribution. 
For instance, we can express the transform function in nucleus sampling \citep{holtzman2020curious} with:
\begin{subequations}
    \begin{align}
        \scaling_{\mathrm{nucleus}}\big((\plm(\token \mid \strltt)\big) &= \identity\big(\plm(\token \mid \strltt)\big) \\
        \truncset_{\mathrm{nucleus}}\big((\plm(\token \mid \strltt)\big) &= \argmin_{\eosalphabet' \subseteq \eosalphabet}|\eosalphabet'| \qquad s.t. \sum_{\token \in \eosalphabet'} \plm(\token \mid \strltt) \geq \topp 
    \end{align}
\end{subequations}
where $\identity$ denotes the identity function and $\topp \in \Rpos$ is a hyperparameter.
See \Cref{app:sampling-adaptor-examples} for more examples.

Let $\plm$ be a language model and $\palign$ its aligned counterpart.
We consider the probability of a string under the induced distribution $\palignsampling$ when a globally normalized sampling adaptor is used with an aligned model $\palign$. 
Since our goal is simply to provide an intuition behind what might happen by analyzing the effects of the global sampling adaptor, we make the following simplifying assumptions.
First, we assume that the alignment does not modify the truncation function $\truncset$: $\truncset\mleft(\plm\mleft(\cdot \mid \str\mright)\mright) = \truncset\mleft(\palign \mleft(\cdot \mid \str\mright)\mright)$ for all $\str \in \kleene{\alphabet}$.
Second, we assume that $\scaling\mleft(\cdot\mright) = \identity\mleft(\cdot\mright)$.
With this, we can derive
\begin{subequations}\label{eq:adaptor-string-prob}
 \begin{align} 
   \palignsampling(\str) &\propto \globalsamplingadaptor\big(\palign(\cdot \mid \str)\big)(\eos) \prodstring \globalsamplingadaptor\big(\palign(\cdot \mid \strltt)\big)(\tokent) \nonumber \\
   &= \scaling(\palign(\eos \mid \str)) \indicatorf \setof{\eos \in \truncset\mleft(\palign\mleft(\cdot \mid \str \mright)\mright)} \prodstring \scaling(\palign(\tokent \mid \strltt)) \indicatorf \setof{\tokent \in \truncset\mleft(\palign\mleft(\cdot \mid \strltt \mright)\mright)} \\
   &= \palign(\eos \mid \str) \indicatorf \setof{\eos \in \truncset\mleft(\palign\mleft(\cdot \mid \str \mright)\mright)} \prodstring \palign(\tokent \mid \strltt) \indicatorf \setof{\tokent \in \truncset\mleft(\palign\mleft(\cdot \mid \strltt \mright) \mright)} \\
   &= \palign(\eos \mid \str) \indicatorf \setof{\eos \in \truncset\mleft(\palign\mleft(\cdot \mid \str \mright)\mright)} \prodstring \palign(\tokent \mid \strltt) \indicatorf \setof{\tokent \in \truncset\mleft(\palign\mleft(\cdot \mid \strltt \mright) \mright)} \\
   &= \palign(\eos \mid \str) \indicatorf \setof{\eos \in \truncset\mleft(\plm\mleft(\cdot \mid \str \mright)\mright)} \prodstring \palign(\tokent \mid \strltt) \indicatorf \setof{\tokent \in \truncset\mleft(\plm\mleft(\cdot \mid \strltt \mright) \mright)} \\
   &= \indicatorf \setof{\eos \in \truncset\mleft(\plm\mleft(\cdot \mid \str \mright)\mright)} \prodstring \indicatorf \setof{\tokent \in \truncset\mleft(\plm\mleft(\cdot \mid \strltt \mright) \mright)} \palign(\eos \mid \str) \prodstring \plm(\tokent \mid \strltt) \\
   &= \indicatorf \setof{\eos \in \truncset\mleft(\plm\mleft(\cdot \mid \str \mright)\mright)} \prodstring \indicatorf \setof{\tokent \in \truncset\mleft(\plm\mleft(\cdot \mid \strltt \mright) \mright)} \palign(\str) \\
   &\propto \underbrace{\indicatorf \setof{\eos \in \truncset\mleft(\plm\mleft(\cdot \mid \str \mright)\mright)} \prodstring \indicatorf \setof{\tokent \in \truncset\mleft(\plm\mleft(\cdot \mid \strltt \mright) \mright)}}_{\fprior} \plm(\str) \exp\mleft(\frac{1}{\beta} \reward\mleft(\str\mright)\mright) \\
   &= \fprior \plm(\str) \exp\mleft(\frac{1}{\beta} \reward\mleft(\str\mright)\mright) \\
\end{align}
\end{subequations}
We see that, in this simplified setting, the resulting probability of a string under the adapted and aligned model $\palignsampling$ equals the prior probability scaled by the reward function and by the truncation factors.
Since different sampling adaptors affect this relationship differently, this simplified example suggests that an appropriate choice of globally normalized sampling adaptor can be used to effectively select strings based on their probability under the prior.
For example, using the transform function from top-$\topk$ sampling will lead to the generation of corpora with higher average probability under the prior,\footnote{For $\topk < |\eosalphabet|$.} and thus, by the anti-correlation derived in our paper, with lower average reward.

\subsection{Examples of Sampling Adaptors}\label{app:sampling-adaptor-examples}
We note that these largely correspond to the examples in \citet{meister-etal-2023-efficacy}.

\begin{example}
We recover \defn{ancestral sampling} when $\scaling(\cdot) = \identity(\cdot)$ and $\truncsett = \eosalphabet$.
\end{example}
\begin{example}
We recover \defn{temperature sampling} when $\scaling(\plm(\tokent \mid \strltt)) \propto \plm(\tokent \mid \strltt)^{\frac{1}{\temp}}$ and $\truncsett = \eosalphabet$.
\end{example}
\begin{example}
We recover \defn{top-$\topk$} sampling \citep{fan-etal-2018-hierarchical} when $\scaling(\cdot) = \identity(\cdot)$ and
\begin{equation}
\truncsett = \argmax_{\eosalphabet' \subseteq \eosalphabet} \sum_{\token \in \eosalphabet'} \plm(\token \mid \strltt) \qquad s.t. \, |\eosalphabet'| = \topk
\end{equation}
i.e., the set of top-$\topk$ most probable symbols.
\end{example}
\begin{example}
We recover \defn{locally typical sampling} \citep{meister-etal-2023-locally} when when $\scaling(\cdot) = \identity(\cdot)$ and 
\begin{equation}
    \truncsett = \argmin_{\eosalphabet' \subseteq \eosalphabet} \sum_{\token \in \eosalphabet'}\big|\ent(\plm(\cdot \mid \strltt)) + \log \plm(\token \mid \strltt) \big| \qquad s.t. \, \sum_{\token \in \eosalphabet'} \plm(\token \mid \strltt) \geq \topp 
\end{equation}
i.e., the set of items with log-probability closest to the symbol-level entropy that collectively have probability mass $\geq \topp$.
\end{example}
\begin{example}
We recover \defn{$\eta$-sampling} \citep{hewitt-etal-2022-truncation} when $\scaling(\cdot) = \identity(\cdot)$ and 
\begin{equation}
    \truncsett = \setof{\token \in \eosalphabet \mid \plm(\token \mid \strltt) > \eta}
\end{equation}
that is, the set of symbols with probability greater than $\eta$, where $\eta = \min(\epsilon, \sqrt{\epsilon} \exp(-\ent(\plm(\cdot \mid \strltt)))))$. 
\end{example}

\newpage
\section{The Probability--Quality Trade-off in DPO-aligned Language Models}\label{sec:langmodelexperiments-dpo}
The probability--quality trade-off also applies to models aligned with direct preference optimization \citep[DPO;][]{rafailov2023direct}. 
Though an explicit reward function is not needed to train a language model with DPO, the training scheme maximises the same backward KL divergence objective as RLHF \citep[\Cref{eq:rlhf-objective};][]{korbak2023rlhf,rafailov2023direct,azar2023general}. 
We should thus expect that \Cref{prop:trade-off} applies to these models and observe the trade-off when we construct a reward function $\reward_\qalign$ using the prior $\plm$ and aligned model $\qalign$ as in \Cref{eq:secret}.
We employ the same setup as in \Cref{sec:langmodelexperiments}.

\paragraph{Models.}
We use the 7B DPO-aligned and prior language models from \citet{lee2024aligning}, both of which are based on Mistral 7B v0.2 \citep{jiang2023mistral}.
The DPO-aligned model is fine-tuned on the Multifaceted Collection \citep{lee2024aligning}, a dataset with 192k samples capturing preferences of style (e.g., clarity, tone), informativeness, and harmlessness, among others.
We construct the ``secret'' reward function as $\reward_\qalign(\str) = \frac{\qalign(\str)}{\plm(\str)}$, omitting the constant term.\looseness=-1

\paragraph{Results.}
We observe results identical to the setting with RLHF-tuned models.
Specifically, we observe a strong anti-correlation (Pearson correlation of $\pearson=-0.97$), trade-off control using sampling adaptors, and the emergence of Simpson's paradox. 
These are expected since RLHF and DPO have the same minimization objective, thus supporting our formal arguments in \Cref{sec:bayesian}. 

\begin{figure*}[ht!]
    \centering
    \includegraphics[width=\textwidth,keepaspectratio]{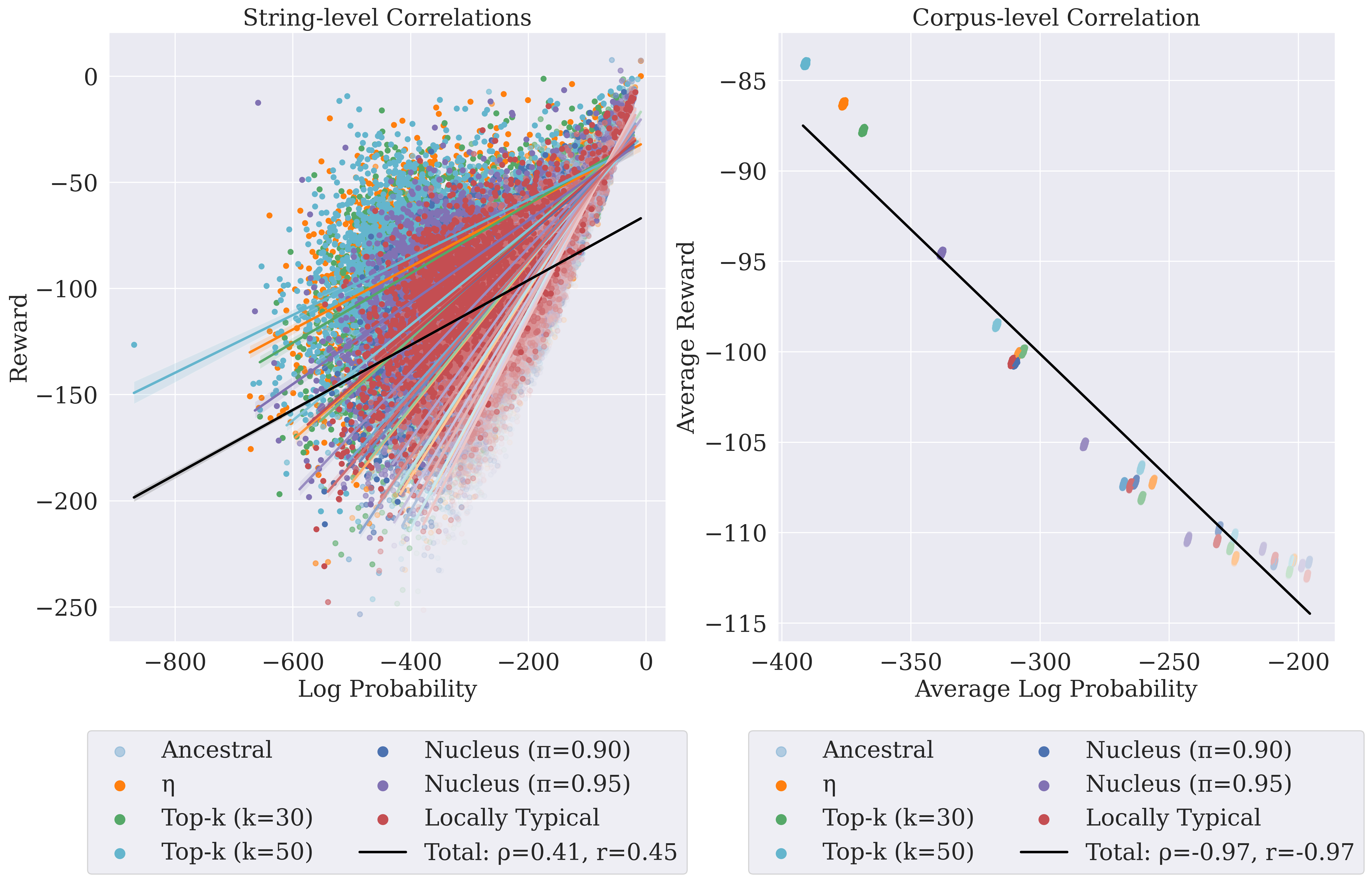}
    \caption{The probability--quality relationship in DPO-tuned models, where quality is measured by the secret reward function. (Left) String-level correlations between log-probability and quality. (Right) Corpus-level correlations between average log-probability and average quality, with corpora created by different sampling adaptors. Higher intensity of the colours denote higher temperatures used with the sampling adaptor.\looseness=-1}
    \label{fig:simpsons-dpo}
\end{figure*}

\newpage
\section{Toy Experiment Distributions}\label{fig:toy}
\begin{figure*}[h!]
    \centering
    \includegraphics[width=0.9 \textwidth,keepaspectratio]{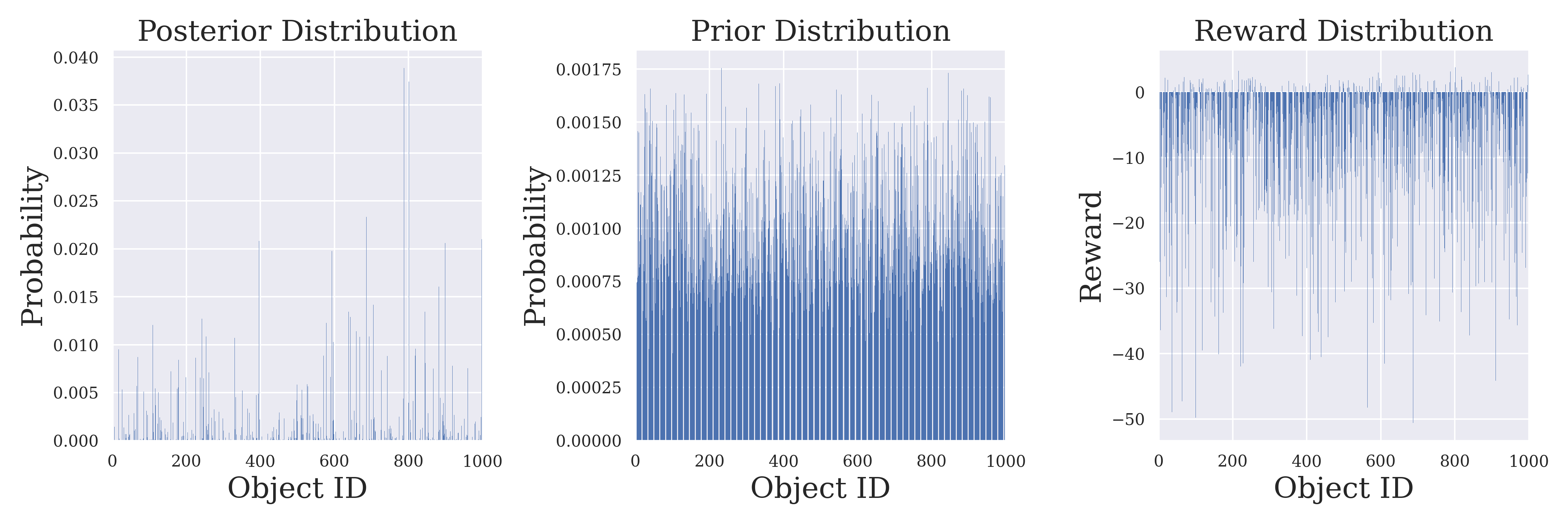}
    \caption{Toy models of $\palign(\datapoint)$, $\plm(\datapoint)$ and $\reward(\datapoint)$ analogous to the distributions over strings.}
   
\end{figure*}

\end{document}